\newcolumntype{Y}{>{\centering\arraybackslash}X}
\theoremstyle{plain} \numberwithin{equation}{section}
\newtheorem{theorem}{Theorem}[section]
\numberwithin{theorem}{section}
\newtheorem{lemma}[theorem]{Lemma}
\newtheorem{corollary}[theorem]{Corollary}
\newtheorem{proposition}[theorem]{Proposition}
\newtheorem{problem}[theorem]{Problem}
\theoremstyle{definition}
\newtheorem{definition}[theorem]{Definition}
\newtheorem{remark}[theorem]{Remark}
\theoremstyle{plain}
\colorlet{pink}{red!40}
\colorlet{lightblue}{blue!30}
\colorlet{lightgreen}{green!30}
\patchcmd{\@todonotes@drawMarginNoteWithLine}{\marginpar}{\marginnote}{}{}
\numberwithin{equation}{section}
\DeclareMathOperator{\supp}{supp}
\DeclareMathOperator{\LT}{LT}
\DeclareMathOperator{\spec}{Spec}
\DeclareMathOperator{\pr}{pr}
\def\Syz{\mathrm{Syz}}
\def\supp{\mathrm{supp}}
\newcommand{\gr}{Gr\"obner}
\newtheorem{heuristic}[theorem]{Heuristic}
\title{Geometric Generality of Transformer-Based Gröbner Basis Computation}
\author{
\name \hspace{-3mm} Yuta Kambe \email \texttt{\href{mailto:kambe.yuta@bx.mitsubishielectric.co.jp}{kambe.yuta@bx.mitsubishielectric.co.jp}} \\
       \addr 
       Mitsubishi Electric \\
       Kanagawa, Japan 
       \AND 
            \name Yota Maeda \email \texttt{\href{mailto:y.maeda.math@gmail.com}{y.maeda.math@gmail.com}} \\
       \addr 
       Technische Universität Darmstadt / Tohoku University\\
       Darmstadt, Germany / Sendai, Japan 
       \AND
       \name Tristan Vaccon \email \texttt{\href{mailto:tristan.vaccon@unilim.fr}{tristan.vaccon@unilim.fr}} \\
       \addr 
       Universit\'{e} de Limoges\\
       Limoges, France 
       }
\begin{document}

\maketitle

\begin{abstract}
The intersection of deep learning and symbolic mathematics has seen rapid progress in recent years, exemplified by the work of Lample and Charton \cite{LC19}.
They demonstrated that effective training of machine learning models for solving mathematical problems critically depends on high-quality, domain-specific datasets.  In this paper, we address the computation of \gr\ basis using Transformers.
While a dataset generation method tailored to Transformer-based \gr\ basis computation has previously been proposed \cite{KIKVY24}, it lacked theoretical guarantees regarding the generality or quality of the generated datasets.
In this work, we prove that datasets generated by the previously proposed algorithm are sufficiently general, enabling one to ensure that Transformers can learn a sufficiently diverse range of Gröbner bases.
Moreover, we propose an extended and generalized algorithm to systematically construct datasets of ideal generators, further enhancing the training effectiveness of Transformer. 
Our results provide a rigorous geometric foundation for Transformers to address a mathematical problem, which is an answer to Lample and Charton's idea of training on diverse or representative inputs.

\end{abstract}

\section{Introduction}\label{sec:introduction}
In recent years, the practical success of machine learning models has sparked extensive research into their capabilities across diverse problem domains. This paper specifically examines the potential of Transformer-based models \cite{Transformer} for solving complex mathematical problems. 
Notably, previous research, such as on symbolic integration \cite{LC19}, has demonstrated that Transformers can outperform traditional mathematical software in terms of computational speed and efficiency.
Such findings underscore the potential value of investigating how machine learning models, particularly Transformers, perform on other mathematical challenges.
These promising results highlight the importance of exploring the effectiveness of Transformer-based models on other computationally demanding mathematical challenges, particularly those with substantial implications in cryptography. One significant area of interest is the Learning With Errors (LWE) problem \cite{LWEandCryp}, known to be computationally challenging in certain worst-case scenarios involving lattice problems. The complexity of the LWE problem underpins the security of contemporary lattice-based cryptographic schemes, such as CRYSTALS-Kyber \cite{CRYSTALS-Kyber3.02} and CRYSTALS-Dilithium \cite{Crystals-Dilithium3.1}, both recently standardized by the U.S. NIST as post-quantum cryptographic standards \cite{NIST2022}. Recent advancements, exemplified by the SALSA project \cite{SALSA,SALSA:VERDE,SALSA:PICANTE}, further illustrate the emerging capability of Transformers in addressing LWE-related challenges.
 There is another recent work by Alfarano, Charton, and Hayat \cite{ACH24}, which introduced a Transformer-based model capable of computing Lyapunov functions for dynamical systems, a longstanding open problem in mathematics. Lyapunov functions are essential for determining system stability, yet there has been no known deterministic algorithm to find them for general systems for over a century. Remarkably, their Transformers model rapidly identified Lyapunov functions for multiple previously unsolved systems, drastically reducing computation time from an average of 16 minutes (traditional numerical methods) to mere seconds. 
This emerging line of work highlights the growing potential of machine learning models to address mathematically complex and cryptographically relevant problems efficiently.
A common thread among these advancements is the potential of machine learning, particularly Transformers models, to address mathematical problems generally considered to be NP-hard or similarly complex. 

However, achieving this potential critically depends on the availability of high-quality datasets carefully designed to incorporate domain-specific knowledge tailored to each particular mathematical problem area.
In their paper, Lample and Charton \cite{LC19} emphasize the importance of generating training data that adequately spans the problem space.
They argue that a mixture of BWD and IBP leads to datasets that better represent the diversity of the problem space, implicitly aiming for a kind of “density” over the input distribution, albeit without a formal topological or algebraic framework.

Based on this background, we address problems in computational algebra using machine learning approaches, specifically focusing on the notion of \gr\ bases.                                                          
Given an ideal $I$ generated by a polynomial system $F$, a set $G = \langle g_1, g_2, \dots, g_t\rangle$ of generators of $I$ is called a \textit{\gr\ basis} with respect to a fixed monomial ordering if the following equation 
\[\LT(I)=\langle \LT(g_1),\LT(g_2 ),\cdots,\LT(g_t)\rangle\]
holds.
Here, $\LT(h)$ (resp. $\LT(I)$) denote the leading term of a polynomial $h$ (resp. the set of all leading terms of polynomials in $I$) and the brackets around $\cdot$ in $\langle \cdot \rangle$ denote the ideal generated by $\cdot$.           
Intuitively, a \gr\ basis provides a canonical form for representing the elements of $I$. A fundamental property of Gröbner bases is their ability to facilitate polynomial division, ensuring a unique remainder. This feature underpins their algorithmic effectiveness in numerous computational tasks within algebraic geometry, polynomial system solving, and related areas \cite{GTZ88,Eis95, CLO97,CLO98}.
Gröbner bases have significant applications, notably in simplifying and solving systems of multivariate polynomial equations \cite{CLO97}. Additionally, they are central to various cryptographic constructions. Similar to lattice-based problems, the computational difficulty associated with Gröbner bases serves as the security foundation for cryptographic primitives and protocols, including block ciphers \cite{BlockGrob}, stream ciphers \cite{AlgonStream}, and multivariate cryptographic schemes \cite{UOV,AlgHFE}. The applicability of Gröbner bases extends even further, impacting diverse fields such as inverse kinematics and path planning in robotics \cite{CLO97,YTM23}, control engineering \cite{AH00}, and biological system modeling \cite{systemsbiology}. Consequently, developing more efficient algorithms for Gröbner basis computation holds considerable promise for advancements across a wide spectrum of scientific and engineering domains.
The Gröbner basis computation problem, finding a Gröbner basis for a given polynomial ideal, is classically known to be NP-hard \cite{MM82}. Recent work \cite{KIKVY24} has explored machine learning approaches for tackling this challenge. Leveraging purely algebraic insights, they developed a method for training Transformers to compute Gröbner bases, demonstrating experimentally the potential for significant computational speed-ups. In several examples of polynomial systems, the model for computing Gröbner bases demonstrated a performance increase of up to 100 times compared to traditional methods, such as Buchberger's algorithm and the F4 algorithm \cite{Buchberger76,faugere1999f4}.
They devised an algorithm to generate datasets consisting of polynomial ideals paired with their corresponding Gröbner bases for model training. Considering that the quality of these datasets directly influences the performance of the learned models, it is essential to investigate their adequacy. However, the original method lacked theoretical guarantees regarding the generality or quality of the generated datasets.

In this paper, assuming a heuristic 
we prove that the dataset construction algorithm presented in \cite{KIKVY24} yields datasets with sufficient generality; see Problem \ref{prob:taking_generators} for the precise formulation. Furthermore, we introduce a novel algorithm designed to generate an even broader class of datasets.
In contrast to the empirical notion of representativeness \cite{LC19}, our work formalizes a method of dataset construction in which the generated examples are Zariski dense in the target space of Gröbner bases.
Zariski density, a concept from algebraic geometry, implies that the training data are not confined to a special subset but are instead topologically dense in the space of interest. In particular, our result ensures that any non-trivial algebraic property satisfied on the dataset extends to the full output space, and the trained model can, in principle, generalize to all generic inputs that define the same algebraic structure.
This complements the design for constructing training datasets of Lample and Charton by providing a rigorous geometric foundation for the idea of training on diverse or representative inputs.
Density is a geometrical concept that describes the generality of elements. A subset of a topological space is said to be dense if its closure coincides with the entire space. In the field of algebraic geometry, the generality of given elements is often assessed by examining the density of the subset comprising those elements. Following a conventional approach, we represent the set of polynomial systems as a union of affine spaces, where the coordinates correspond to the coefficients of the polynomial systems. The primary question addressed in this study is whether the dataset generated by the algorithm introduced in \cite{KIKVY24} is dense.

\section{Problems and main results}
\label{sec:problems}
This work aims to provide a theoretical explanation for the high learning accuracy observed in the Gröbner basis computation experiments over the field of rational numbers, as reported in \cite{KIKVY24}. In particular, we demonstrate that the training data generated by the algorithm satisfies a certain notion of density, which ensures a rich and diverse set of examples conducive to effective learning. Our result suggests that the empirical success in the rational setting is not incidental but supported by an underlying mathematical structure. While models perform less accurately over finite fields, possibly due to the absence of such density, our findings offer a structural perspective that can guide future theoretical investigations; see also Remark \ref{rem:coefficient fields}.
\subsection{A problem concerning datasets}
In the training of machine learning models for the computation of \gr\ bases, it is necessary to prepare a large-scale training dataset consisting of the pairs $(F,G)$, with $F=(f_1,\ldots,f_m)$ a generating set of an ideal that serves as an input of the model, and $G=(g_1,\ldots,g_n)$ a \gr\ basis of the ideal generated by $F$. Since \gr\ basis computation is NP-hard in general,
it is difficult to construct such a dataset on a large scale. In the algorithm \cite{KIKVY24} for computing the \gr\ basis, the output of Transformers is restricted to a certain class, called shape position \gr\ basis, parameterized freely by the coefficients of non-leading terms. Using shape position \gr\ bases, they constructed a large training dataset by creating polynomial systems that generate the same ideals as a given \gr\ basis (see section \ref{sec:algorithm} for more detail).

Let us consider an algorithm; for a given uniformly random set of \gr\ basis $\{G_1,\ldots,G_N\}$ belonging to a certain class $\mathcal{G}$ of \gr\ basis \footnote{For example, $\mathcal{G} = \{ G \mid \text{$G$ is a shape position \gr\ basis}\}$.}, it outputs a random training dataset 
\[\{(F_1,G_1),\ldots,(F_N,G_N)\}\]
following the uniform distribution over the set
\[ \{(F,G_i) \mid \text{$F$ is a polynomial system with}\ \langle F \rangle = \langle G_i \rangle\}.\]
For each \gr\ basis $G_i$, such an algorithm should be capable of outputting any pair $(F,G_i)$ such that $F$ could potentially be an input of the model. We connect this latter property of an algorithm to the notion of density in an algebraic variety.


For example, let $R = K[x_1,\ldots,x_r]$ be a polynomial ring over a field $K$.
Fix integers $m,n$ with $m \geq n=r$, noting that the number of elements in a shape position \gr\ basis is equals to the number of variables, and since the ideal $\langle G \rangle$ generated by $G$ is $0$-dimensional, the number of generators of $I$ is greater than or equal to the number of variables. The model considered in \cite{KIKVY24} took input-output pairs $(F,G)$, a shape position \gr\ basis $G = (g_1,\ldots,g_n)$, and a polynomial system $F=(f_1,\ldots,f_m)$ consisting of $m$ elements that generate the same ideal as $G$. The ideal algorithm for generating a generic training dataset, as envisioned in this work, should be capable of producing pairs $(F,G)$ such that $F$ is an arbitrary generic element of the set
\[ \mathcal{F} := \{ F = (f_1,\ldots, f_m) \in R^m \mid \langle G \rangle = \langle F \rangle \} \]
where $G$ is a fixed Gröbner basis in a specified shape position.

To construct a generic learning dataset, we focus on sets that are dense within a topological space. Informally, a subset $A$ of a topological space $X$ is dense if every point in $X$ lies arbitrarily close to some point in $A$. Thus, if a given learning dataset is dense in the topological space comprising all possible learning objects, we may conclude that the dataset is generic with respect to the given topology. In this study, we examine dense sets in $\mathcal{F}$ under a specific topology, namely the \textit{Zariski topology} \cite{Har77}.
                                                                                                                                               
More precisely, this paper aims to solve the following problem.

\begin{problem}\label{prob:taking_generators}
Provide an algorithm that, for a given set of polynomials $G=(g_1,\ldots,g_n)$ as input, randomly outputs an element $F$ of $\mathcal{F}$ such that the set of all possible outputs
\[\mathcal{F}_0 := \{F \in \mathcal{F} \mid \text{$F$ is an output of the algorithm}\} \]
is a dense subset of $\mathcal{F}$ for the Zariski topology.
\end{problem}

Here, we equip $R^m$ and $\mathcal{F}$ with the Zariski topology as follows; for a non-negative integer $D$, let $R_{\leq D} := \{ f \in R \mid \deg f \leq D\}$ be the subset consisting of elements in $R$ of total degree less than or equal to $ D$. The set $R_{\leq D}$ is a vector space over $K$ with a basis
\[ \{ x^{\alpha} = x_1^{\alpha_1} \cdots x_r^{\alpha_r} \mid |\alpha| = \alpha_1+\cdots+\alpha_r \leq D\}\]
of monomials in the $r$ variables. In general, a vector space over $K$ with a basis $\{e_1,\ldots,e_N\}$ can be identified with the affine space 
\[\mathbb{A}_K^N = \spec K[X_1,\ldots,X_N].\]
Then we identify $R_{\leq D}$ with the affine space  $K[X_{\alpha} \mid |\alpha| \leq D]$. In this paper, the Zariski topology of $R^m$ is defined as the canonical topology of the union of topological spaces 
\[ (R_{\leq 0})^m \subset (R_{\leq 1})^m \subset \cdots \subset (R_{\leq D})^m \subset \cdots \subset R^m = \bigcup_{D=0}^{\infty} (R_{\leq D})^m.\]
Namely, a subset $U$ of $R$ is open if and only if $U$ is the union of open subsets $U_D \subset (R_{\leq D})^m$. This identification on the set of polynomial systems is a traditional way in the area of research about the generality of properties of polynomial systems, for example, semi-regular sequence \cite{pardue2010generic}, and comprehensive \gr\ bases \cite{weispfenning1992comprehensive,suzuki2006simple}.

\subsection{Main results}
In this paper, we solve Problem \ref{prob:taking_generators} assuming a heuristic.
We propose an algorithm (Algorithm \ref{alg:main_algorithm_intro}) that randomly outputs a set of generators $F=(f_1,\ldots,f_m)$ of $\langle G \rangle$ for a given set of polynomials $G=(g_1,\ldots,g_n)$.

\begin{algorithm}
    \DontPrintSemicolon
	\SetKwInOut{Input}{input}\SetKwInOut{Output}{output}
     \caption{Algorithm to compute random generators of $\langle G \rangle$ (See Algorithm \ref{alg:main_algorithm} for a detailed version).}
    \label{alg:main_algorithm_intro}
    \Input{$G = (g_1,\ldots, g_n)^{T} \in R^{n \times 1}$, $m \geq n$.}
    \Output{$F = (f_1,\ldots,f_m)^{T} \in R^{m \times 1}$ such that $\langle F \rangle = \langle G \rangle$.}
    \BlankLine
    Pick a finite product $U$ of random elementary matrices; \;
	\Return $F = AG$ where
	\[ A = U\left(\begin{array}{c} E_n \\ \hline O_{(m-n) \times n} \end{array}\right). \] \;
\end{algorithm}

Here we denote by $E_n$ the identity matrix of size $n$, and by $O_{(m-n)\times n}$ the $(m-n) \times n$ zero matrix.
Let us recall
\begin{align*}
    \mathcal{F}& := \{ F \in R^m \mid \text{$\langle G \rangle$ is generated by $F$} \}, \\
    \mathcal{F}_0 &:= \{ F \in \mathcal{F} \mid \text{$F$ is an output of Algorithm \ref{alg:main_algorithm_intro}} \} \\
     &= \left\{ \left. U\left(\begin{array}{c} E_n \\ \hline O_{(m-n) \times n} \end{array}\right)G \right| \begin{aligned}
   \text{$U$ is a finite product of}\\
   \text{elementary matrices}
     \end{aligned}\right\}.
\end{align*}
Our heuristic assumption concerns the irreducibility of an algebraic set.

\begin{heuristic}\label{heu:chi_is_irr_intro}
Let us consider a block representation of $B \in R^{n \times m}$ and $A \in R^{m \times n}$:
\[ B = (B_1 | B_2),\ A = \left(\begin{array}{c} A_1 \\ \hline A_2 \end{array} \right),\]
where $B_1, A_1 \in R^{n \times n}$, $B_2 \in R^{n \times (m-n)}$ and $A_2 \in R^{(m-n) \times n}$. For given $D \geq 0$ and $G = (g_1,\ldots, g_n)$, the algebraic set
\[ \mathcal{X_{\leq D}} = \left\{ (B,A) \in R_{\leq D}^{n \times m} \times R_{\leq D}^{m \times n} \left| \begin{aligned} &(BA - E_n)G = O_{n \times 1},\\ &B_1 A_1 \in R_{\leq D}^{n \times n} \end{aligned} \right. \right\} \]
in the affine space $R_{\leq D}^{n \times m} \times R_{\leq D}^{m \times n}$ is irreducible.
\end{heuristic}

Here, our main theorem asserts the density in $\mathcal{F}$ of 
\[ \tilde{\mathcal{F}}_{\leq D} := \{ F \in R^{m \times 1} \mid \langle F \rangle = \langle G \rangle,\ \exists\, (B,A) \in \mathcal{X}_{\leq D},\ F = AG \}. \]

\begin{theorem}[{Corollary \ref{cor:m2n}}]
\label{thm:main}
Using the same notation as Problem \ref{prob:taking_generators}, assume that the base field $K$ is a Hilbertian field and $m \geq 2n \geq 3$. 
Then, under Heuristic \ref{heu:chi_is_irr_intro} for $D$ and $G$, the set $\mathcal{F}_0 \cap \tilde{\mathcal{F}}_{\leq D}$ is dense for the relative Zariski topology of $\tilde{\mathcal{F}}_{\leq D}.$
\end{theorem}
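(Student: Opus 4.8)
The plan is to recognise $\tilde{\mathcal F}_{\le D}$ as the image of the (by hypothesis) irreducible variety $\mathcal X_{\le D}$, deduce its irreducibility, and then check that the pairs whose image already lies in the elementary orbit $\mathcal F_0$ form a Zariski-dense subset of $\mathcal X_{\le D}$. The hypothesis $m\ge 2n$ will supply the room for an explicit chain of elementary operations, and Hilbertianity of $K$ will be used to pass from a generic statement to a dense set of $K$-rational pairs.

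First I would record that the requirement $\langle F\rangle=\langle G\rangle$ in the definition of $\tilde{\mathcal F}_{\le D}$ is automatic: for $(B,A)\in\mathcal X_{\le D}$ and $F=AG$ one has $BF=BAG=G$ since $(BA-E_n)G=O_{n\times1}$, whence $\langle G\rangle\subseteq\langle F\rangle\subseteq\langle G\rangle$. Thus $\tilde{\mathcal F}_{\le D}=\Phi(\mathcal X_{\le D})$, where $\Phi\colon(B,A)\mapsto AG$ is the restriction to $\mathcal X_{\le D}$ of a $K$-linear map into the affine space $(R_{\le D+D_0})^m$ with $D_0:=\max_i\deg g_i$. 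By Heuristic~\ref{heu:chi_is_irr_intro}, $\mathcal X_{\le D}$ is irreducible, so its image $\tilde{\mathcal F}_{\le D}$ is an irreducible constructible set; let $Z:=\overline{\tilde{\mathcal F}_{\le D}}$ be its Zariski closure. Since $\tilde{\mathcal F}_{\le D}$ is dense in the irreducible $Z$, a subset of $\tilde{\mathcal F}_{\le D}$ is dense for the relative topology exactly when its closure equals $Z$. Setting $\mathcal X^{\mathrm{good}}_{\le D}:=\{(B,A)\in\mathcal X_{\le D}\mid AG\in\mathcal F_0\}$, continuity of $\Phi$ yields $\tilde{\mathcal F}_{\le D}=\Phi(\overline{\mathcal X^{\mathrm{good}}_{\le D}})\subseteq\overline{\Phi(\mathcal X^{\mathrm{good}}_{\le D})}\subseteq\overline{\mathcal F_0\cap\tilde{\mathcal F}_{\le D}}$ as soon as $\mathcal X^{\mathrm{good}}_{\le D}$ is Zariski dense in $\mathcal X_{\le D}$. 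Hence the theorem reduces to the density of $\mathcal X^{\mathrm{good}}_{\le D}$.

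To establish that density I would exhibit a nonempty Zariski-open $V_0\subseteq\mathcal X_{\le D}$ with $V_0(K)\subseteq\mathcal X^{\mathrm{good}}_{\le D}$; then, since $K$ is Hilbertian and hence infinite, $\mathcal X_{\le D}(K)$ is Zariski dense in $\mathcal X_{\le D}$, so $V_0(K)$, and a fortiori $\mathcal X^{\mathrm{good}}_{\le D}\supseteq V_0(K)$, are dense. To produce $V_0$ I would pass to the function field $L:=K(\mathcal X_{\le D})$ with generic pair $(B_\eta,A_\eta)$, whose entries lie in $L\otimes_K R=L[x_1,\dots,x_r]$, and show that $F_\eta:=A_\eta G$ belongs to the $E_m(L[x_1,\dots,x_r])$-orbit of $(g_1,\dots,g_n,0,\dots,0)^{T}$. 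This is where $m\ge 2n$ enters. The projection of $\mathcal X_{\le D}$ onto the lower block $A_2\in R_{\le D}^{(m-n)\times n}$ of $A$ admits an obvious section (take $B=(E_n\mid O)$ and let $A$ have top block $E_n$ and bottom block $A_2$), hence is surjective; so at the generic point the last $m-n$ components of $F_\eta$ are generic $R_{\le D}$-combinations of $g_1,\dots,g_n$, and since $m-n\ge n$ while $\langle G\rangle$ is generated by the $n$ polynomials $g_1,\dots,g_n$, these $m-n$ components already generate $\langle G\rangle$. One can then clear $(g_1,\dots,g_n,0,\dots,0)^{T}$ to $F_\eta$ in two rounds of elementary operations: first add suitable multiples of coordinates $1,\dots,n$ to coordinates $n+1,\dots,m$, with coefficients read off from $A_\eta$, placing $f_{n+1},\dots,f_m$ there; then, writing each $g_k$ as a combination of $f_{n+1},\dots,f_m$ (valid over a dense open of $\mathcal X_{\le D}$, with coefficients rational on $\mathcal X_{\le D}$) and using $B_\eta F_\eta=G$, add suitable multiples of coordinates $n+1,\dots,m$ to coordinates $1,\dots,n$ to replace $g_i$ by $f_i$. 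This is a bounded elementary word whose coefficients are regular on some dense open $V_0\subseteq\mathcal X_{\le D}$; at each $p\in V_0(K)$ it specialises to an element of $E_m(R)$ witnessing $A(p)G\in\mathcal F_0$, so $V_0(K)\subseteq\mathcal X^{\mathrm{good}}_{\le D}$.

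The step I expect to be the main obstacle is the descent to $K$-points, namely guaranteeing that $\mathcal X_{\le D}(K)$, and therefore $V_0(K)$, is Zariski dense in $\mathcal X_{\le D}$: this is exactly the role of the Hilbertian hypothesis, and it calls for enough control on the geometry of $\mathcal X_{\le D}$ (which itself rests on Heuristic~\ref{heu:chi_is_irr_intro}) --- ideally one proves $\mathcal X_{\le D}$ unirational over $K$, with the explicit affine-linear subfamilies above as evidence, and otherwise one fibres $\mathcal X_{\le D}$ over an affine space and applies Hilbert's irreducibility theorem. A secondary technical point is checking that the elementary-operation construction really applies at the generic point of $\mathcal X_{\le D}$, i.e.\ that the truncation $B_1A_1\in R_{\le D}^{n\times n}$ built into $\mathcal X_{\le D}$ does not push $F_\eta$ onto a degenerate locus where $f_{n+1},\dots,f_m$ fail to generate $\langle G\rangle$.
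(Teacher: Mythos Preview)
Your overall reduction---recognise $\tilde{\mathcal F}_{\le D}$ as the image of the irreducible $\mathcal X_{\le D}$ under $\Phi$, and then prove that $\mathcal X^{\mathrm{good}}_{\le D}=\Phi^{-1}(\mathcal F_0)$ is Zariski dense in $\mathcal X_{\le D}$---is correct and is exactly the paper's reduction. The gap is in your mechanism for proving that density, and it is precisely the point you flag as ``secondary'': it is in fact fatal. The claim that the bottom $m-n$ components $f_{n+1},\dots,f_m$ of $F_\eta$ generate $\langle G\rangle$ at the generic point is false in the borderline case $m=2n$. The reasoning ``$m-n\ge n$ generic combinations of $n$ generators still generate'' is linear algebra over a field, not over a polynomial ring. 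Concretely, take $n=2$, $m=4$, $r=2$, $G=(x_1,x_2)^T$, $D=1$. The projection to $A_2$ is dominant, so $A_{2,\eta}$ is a $2\times 2$ matrix with algebraically independent degree-$\le 1$ entries, and $f_3,f_4$ are two generic conics through the origin; by B\'ezout they meet in three further affine points, so $\langle f_3,f_4\rangle\neq\langle x_1,x_2\rangle$ over $L[x_1,x_2]$. The relation $B_\eta F_\eta=G$ you invoke only expresses each $g_k$ through \emph{all} the $f_j$, so it does not supply the coefficients your second round of row operations needs. Hence no explicit elementary word with coefficients rational on a dense open of $\mathcal X_{\le D}$ arises this way.

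The paper's route is genuinely different and sidesteps both this issue and your ``main obstacle'' about $K$-points. It never builds an elementary word. Instead it proves the pointwise criterion (Proposition~\ref{prop:irreducible_determinant}): if $\det(B_1A_1)$ is an \emph{irreducible} element of $R$, then one of $\det B_1,\det A_1$ lies in $K^\times$, and in either case one can replace $A$ by a left regular matrix $A'$ with $A'G=AG$, so $AG\in\mathcal F_0$. Density then comes from the map $p:(B,A)\mapsto B_1A_1\in R_{\le D}^{\,n\times n}$. Hilbert's irreducibility is used in its literal form, not for density of rational points on $\mathcal X_{\le D}$: the generic determinant is irreducible in $K(c_{\alpha,i,j})[x_1,\dots,x_r]$ (Lemma~\ref{lem:irreducibility_of_determinant}), so over a Hilbertian $K$ the locus $\{C:\det C\text{ irreducible in }R\}$ is Zariski dense in $R_{\le D}^{\,n\times n}$. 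The hypothesis $m\ge 2n$ is used only to build a section of $p$, namely $C\mapsto\bigl((E_n\mid E_n\mid O),\ (C;\,E_n-C;\,O)\bigr)$, which makes $p$ surjective and forces its flat locus to be nonempty; one then pulls the dense irreducible-determinant locus back along $p$ (Lemma~\ref{lem:density_of_preimage}, Theorem~\ref{thm:density_of_outputs}) to obtain a dense subset of $\mathcal X_{\le D}$ lying inside $\Phi^{-1}(\mathcal F_0)$.
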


Here we say a field $K$ is a \textit{Hilbertian field} if Hilbert's irreducibility theorem (Theorem \ref{thm.Hilbert's irreducibility theorem}, \cite[Theorem 14.4.2]{FieldArithmetic2023}) holds for $K$. For example, number fields are Hilbertian.


\begin{corollary}
Assume that the same hypothesis in Theorem \ref{thm:main} holds. If Heuristic \ref{heu:chi_is_irr_intro} is true for sufficiently large integers $D \geq 0$ and a given $G$, then $\mathcal{F}_0$ is Zariski dense in $\mathcal{F}$.    
\end{corollary}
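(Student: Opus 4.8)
The plan is to deduce the Corollary from Theorem~\ref{thm:main} by letting $D$ grow and exhausting $\mathcal{F}$ by the nested pieces $\tilde{\mathcal{F}}_{\leq D}$. First I would record the monotonicity $\tilde{\mathcal{F}}_{\leq D} \subseteq \tilde{\mathcal{F}}_{\leq D'}$ for $D \leq D'$, which follows because $R_{\leq D}^{n\times m} \times R_{\leq D}^{m\times n} \subseteq R_{\leq D'}^{n\times m} \times R_{\leq D'}^{m\times n}$ and the defining equations $(BA-E_n)G = O$ and $B_1A_1 \in R_{\leq D'}$ are satisfied a fortiori; hence $\mathcal{X}_{\leq D} \subseteq \mathcal{X}_{\leq D'}$ and any $F = AG$ arising at level $D$ also arises at level $D'$. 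Next I would observe that every $F \in \mathcal{F}$ lies in some $\tilde{\mathcal{F}}_{\leq D}$: indeed $\langle F\rangle = \langle G\rangle$ means there exist matrices $A \in R^{m\times n}$, $B \in R^{n\times m}$ with $F = AG$ and $G = BF$, so $(BA - E_n)G = BF - G = O_{n\times 1}$; choosing $D$ large enough to bound the degrees of all entries of $A$, $B$ and of the products $B_1A_1$ places $(B,A)$ in $\mathcal{X}_{\leq D}$ and thus $F \in \tilde{\mathcal{F}}_{\leq D}$. Therefore $\mathcal{F} = \bigcup_{D \geq 0} \tilde{\mathcal{F}}_{\leq D}$, and in fact $\mathcal{F} = \overline{\tilde{\mathcal{F}}_{\leq D}}$ for $D$ sufficiently large if the union stabilizes in closure — but more robustly I will argue directly with density.

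The core of the argument: fix $D_0$ large enough that Heuristic~\ref{heu:chi_is_irr_intro} holds for all $D \geq D_0$ and for the given $G$ (this is the hypothesis of the Corollary). By Theorem~\ref{thm:main}, for each such $D$ the set $\mathcal{F}_0 \cap \tilde{\mathcal{F}}_{\leq D}$ is dense in $\tilde{\mathcal{F}}_{\leq D}$ for the relative Zariski topology. I would then take the closure in $\mathcal{F}$: since closure is monotone and compatible with the subspace topology, $\overline{\mathcal{F}_0 \cap \tilde{\mathcal{F}}_{\leq D}} \supseteq \overline{\tilde{\mathcal{F}}_{\leq D}}^{\,\mathcal{F}}$ where the right-hand closure is taken in $\mathcal{F}$. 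Taking the union over $D \geq D_0$ and using $\mathcal{F}_0 \cap \tilde{\mathcal{F}}_{\leq D} \subseteq \mathcal{F}_0$, we get
\[
\overline{\mathcal{F}_0} \;\supseteq\; \bigcup_{D \geq D_0} \overline{\tilde{\mathcal{F}}_{\leq D}}^{\,\mathcal{F}} \;\supseteq\; \bigcup_{D \geq D_0} \tilde{\mathcal{F}}_{\leq D} \;=\; \mathcal{F},
\]
the last equality because the nested union over all $D \geq 0$ equals $\mathcal{F}$ and dropping the finitely many initial terms $D < D_0$ changes nothing by monotonicity. Hence $\overline{\mathcal{F}_0} = \mathcal{F}$, i.e.\ $\mathcal{F}_0$ is Zariski dense in $\mathcal{F}$.

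The main subtlety I expect is bookkeeping about which topology each closure is taken in: $\tilde{\mathcal{F}}_{\leq D}$ carries both its own relative topology (from $R_{\leq D}^m$, or rather from the image of $\mathcal{X}_{\leq D}$) and the relative topology induced from $\mathcal{F}$, and I must check these are compatible so that density in the former implies the containment of closures in the latter. Since $\mathcal{F}$ itself is topologized as the colimit (union) of the $(R_{\leq D})^m$, a subset of $\tilde{\mathcal{F}}_{\leq D}$ that is dense there has closure in $\mathcal{F}$ containing all of $\tilde{\mathcal{F}}_{\leq D}$; this is exactly the colimit-topology compatibility built into the definition of the Zariski topology on $R^m$ given in the excerpt, so no new input is needed. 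A secondary point worth stating explicitly is that $\mathcal{F}_0$ as defined (all finite products of elementary matrices times $\bigl(\begin{smallmatrix} E_n \\ O\end{smallmatrix}\bigr)G$) does not itself depend on $D$, so the bound $\mathcal{F}_0 \cap \tilde{\mathcal{F}}_{\leq D} \subseteq \mathcal{F}_0$ is immediate and the exhaustion argument goes through cleanly.
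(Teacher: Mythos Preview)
Your proposal is correct and follows essentially the same approach as the paper's proof: establish the ascending exhaustion $\mathcal{F} = \bigcup_{D'} \tilde{\mathcal{F}}_{\leq D'}$ (the paper cites Proposition~\ref{prop:gene_iff} for this, which is exactly the ``there exist $A,B$ with $F=AG$, $G=BF$'' argument you spell out), apply Theorem~\ref{thm:main} at each level $D \geq D_0$, and conclude that the closure of $\mathcal{F}_0$ contains every $\tilde{\mathcal{F}}_{\leq D}$ and hence all of $\mathcal{F}$. Your discussion of the compatibility of relative topologies is more careful than the paper's, which simply writes $V \cap \tilde{\mathcal{F}}_{\leq D} = \tilde{\mathcal{F}}_{\leq D}$ for $V = \overline{\mathcal{F}_0}$ and takes the union.
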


\begin{proof}
By Prop. \ref{prop:gene_iff} and the definition of $\mathcal{X}_{\leq D}$,  there exists an ascending chain of subsets
\[ \tilde{\mathcal{F}}_{\leq D} \subset \tilde{\mathcal{F}}_{\leq (D+1)} \subset \cdots \subset \mathcal{F} = \bigcup_{D' \geq D} \tilde{\mathcal{F}}_{\leq D'}. \]
Let $V$ be the Zariski closure of $\mathcal{F}_0$ in $\mathcal{F}$. It follows that $V \cap \tilde{\mathcal{F}}_{\leq D} = \tilde{\mathcal{F}}_{\leq D}$ for all sufficiently large $D \geq 0$ from the hypothesis and Theorem \ref{thm:main}. Therefore, we have
\[ V =V \cap \mathcal{F} =  V \cap \bigcup_{D' \geq D} \tilde{\mathcal{F}}_{\leq D'} =  \bigcup_{D' \geq D} V \cap \tilde{\mathcal{F}}_{\leq D'} =  \bigcup_{D' \geq D} \tilde{\mathcal{F}}_{\leq D'} = \mathcal{F}. \]
\end{proof}


The organization of this paper is as follows.
In Section \ref{sec:algorithm}, we provide elementary definitions and notations, and recall the method proposed in \cite{KIKVY24} for constructing the dataset, which we call the left regular matrix method. As a generalization, we introduce a new algorithm (Algorithm \ref{alg:main_algorithm}).
In Section \ref{sec:Density}, we study the density of the set 
\[ \mathcal{F}_0 = \left\{ \left. U \left(\begin{array}{c} E_n \\ \hline O_{(m-n) \times n} \end{array}\right)G \right| U \in \mathbb{E}(m) \right\}, \]
consisting of all possible outputs of Algorithm \ref{alg:main_algorithm}, in 
\[ \mathcal{F} =\{ F \in R^m \mid \langle F \rangle = \langle G \rangle \}.\]
First, we examine the algebraic aspects of $\mathcal{F}$ and $\mathcal{F}_0$ using the Quillen-Suslin theorem (Theorem \ref{thm:Serre_conj}). Subsequently, we show that $\mathcal{F}_0$ is Zariski dense in $\mathcal{F}$. Our main tool is Hilbert’s irreducibility theorem (Theorem \ref{thm.Hilbert's irreducibility theorem}), which asserts that the irreducibility of a generic irreducible polynomial is preserved, even after specialization by variable transformations under certain assumptions.

\begin{remark}
\label{rem:coefficient fields}
In the learning experiments on Gröbner basis computation conducted in \cite{KIKVY24}, an intriguing phenomenon was observed: the learning accuracy varied significantly depending on the choice of the coefficient field. When the coefficient field was the field of rational numbers, the model achieved an accuracy of approximately 90\%. In contrast, when the coefficient field was a finite field, the model's accuracy ranged from 50\% to 70\%.
In general, low learning accuracy in machine learning models can result from the intrinsic complexity of the task or the nature of the training data. Empirical studies have shown that Transformers exhibit difficulties when learning operations over finite fields \cite{gromov2023grokking, furuta2024towards, hahn2024sensitive}. This suggests that the challenge of learning coefficients in Gröbner basis computation may arise from the same underlying and yet-to-be-understood mathematical mechanisms that govern computation over finite fields.
On the other hand, when the coefficients are drawn from the field of rational numbers, it is plausible that the task of computing Gröbner bases becomes relatively more tractable, and that the training data covers a broader and more expressive range of instances, thereby enabling higher learning accuracy.
In our study, we have theoretically demonstrated a certain notion of density in the construction of training datasets generated by the algorithm in \cite{KIKVY24}. This density result implies that, over the field of rational numbers, the learning data exhibits sufficient richness and diversity to support high-accuracy learning. Conversely, over finite fields, such density may not hold—or may be inherently limited—due to structural constraints, potentially explaining the lower empirical accuracy observed. In this sense, our result serves as a kind of theoretical justification for the empirical performance gap between the rational and finite field settings, suggesting that the success over the rationals is not incidental but backed by underlying mathematical guarantees.
\end{remark}

\begin{remark}\label{rem:range_of_n}
In the experiments of \cite{KIKVY24}, a constraint of $m \leq n + 2$ is imposed between the number of elements $n$ of $G$ and the number of elements $m$ of $F$, due to the input length limitations of the vanilla Transformer architecture. Since the main theorem of this paper assumes $m \geq 2n$, the case $n = 2$ is the only setting, while they conjecture that high accuracy can still be achieved over the field of rational numbers even when $m > n + 2$, and the main theorem of this paper provides theoretical support for this belief.
\end{remark}

\subsection*{Acknowledgment}
The authors would like to express their gratitude to Yuki Ishihara, Kazuhiro Yokoyama, and Hiroshi Kera for their many comments and advices regarding the early version of this paper.
This work was also partially supported by the Alexander von Humboldt Foundation through a Humboldt-Research fellowship granted to the second author.

\section*{Notation and convention}

\begin{itemize}
	\item Let $K$ be a field, and $r,m,n$ be integers $\geq 1$ with $m \geq n.$
	\item Let $R = K[x_1,\ldots, x_r]$ be the polynomial ring in $r$ variables over $K.$ For a vector of non-negative integers $\alpha = (\alpha_1,\ldots,\alpha_r) \in \mathbb{Z}_{\geq 0}^r,$ we denote by $x^{\alpha}$ the monomial
 \[ x^{\alpha} := x_1^{\alpha_1} \dots x_r^{\alpha_r} \]
 of total degree $\vert x^\alpha \vert = \sum_{i=1}^r \alpha_i$ using multi-index. For a polynomial $f \in R,$ we define its total degree, denoted by $\deg f,$ to be the maximum among the total degrees of monomials appearing in $f.$
    \item As stated in the Introduction, let
    \[R_{\leq D} := \{ f  \in R \mid \deg f \leq D \} = \left\{\left. \sum_{|\alpha| \leq D} c_{\alpha} x^{\alpha} \right| \{c_{\alpha}\}_{|\alpha|\leq D} \subset K \right\}.\]
    We denote by $N_D$ the number of monomials in $R_{\leq D}.$ Note that we have $N_D = \sum_{d = 0}^D \binom{r+d}{d}$. 

    \item For a tuple of polynomials $F = (f_1,\ldots,f_m) \in R^m$, we denote by $\langle F \rangle = \langle f_1,\ldots,f_m \rangle$ the ideal generated by $f_1,\ldots,f_m$.
    

 
\end{itemize}

In this paper, we do not recall the elementary notions of \gr\ basis theory since we will not assume that the given polynomial set $G$ is a (shape position) \gr\ basis in our results. The interested reader may refer to \cite{CLO97} for an introduction to \gr\ basis theory.



\section{Algorithm for constructing a training dataset}\label{sec:algorithm}

\subsection{Left regular matrix methods}
To obtain a large-scale set of \gr\ basis, in \cite{KIKVY24}, the authors restricted to the case of shape position \gr\ bases. This is the generic case for \gr\ bases of zero-dimensional radical ideals.

\begin{proposition}[{\cite[Proposition 1.6]{ShapeLemma89}}]\label{prop:shape_position}
Let $I$ be a 0-dimensional radical ideal in $K[x_1,\ldots,x_n]$ with the lexicographic order $x_1 > \cdots > x_n$. If $K$ is of characteristic 0 or a finite field having large enough cardinality,
then a random linear coordinate change puts $I$ in shape position. In other words, the reduced \gr\ basis of $I$ consists of $g_1,\cdots ,g_n$ so that 
\[ \begin{aligned}
    g_1 &= x_1 - h_1(x_n),\\
    & \ \vdots \\
    g_{n-1} & = x_{n-1} - h_{n-1}(x_n),\\
    g_n & = g_n(x_n),
\end{aligned}\]
where $h_1,h_2,\ldots,h_{n-1}$ and $g_n$ are polynomials in $x_n$. 
\end{proposition}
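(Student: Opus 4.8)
The plan is to reduce the statement to the classical fact that a generic linear form separates the points of $V(I)$, and then to read the Gröbner basis directly off the coordinate ring. Write $A := K[x_1,\ldots,x_n]/I$. Since $I$ is $0$-dimensional, $d := \dim_K A$ is finite; since $I$ is radical and $K$ is perfect (being of characteristic $0$ or finite), $I$ stays radical after extension of scalars, so $A \otimes_K \overline{K} \cong \overline{K}^{\,d}$ by the Chinese Remainder Theorem and $V(I) \subseteq \overline{K}^{\,n}$ consists of exactly $d$ distinct points $p_1,\ldots,p_d$. Call a linear form $\ell = c_1 x_1 + \cdots + c_n x_n$ \emph{separating} if the scalars $\ell(p_1),\ldots,\ell(p_d)$ are pairwise distinct.

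First I would justify the coordinate change. For each pair $i \neq j$, the equation $\ell(p_i) = \ell(p_j)$ cuts out a proper $K$-linear subspace of the coefficient space $K^n$ (proper because $p_i \neq p_j$), so the non-separating forms lie in a union of at most $\binom{d}{2}$ hyperplanes, while the non-invertible linear substitutions lie in the vanishing locus of a determinant. If $K$ is infinite, or finite with cardinality exceeding the (bounded) number of these forbidden hyperplanes, then the invertible linear substitutions whose last coordinate is a separating form form a nonempty Zariski-open --- in the finite case, simply large --- subset of all linear substitutions; this is the one and only place the hypothesis on $K$ is used. Fixing such a substitution, we may assume $x_n$ itself is separating on $V(I)$.

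Next I would extract the shape. Let $\mu(t) \in K[t]$ be the minimal polynomial of multiplication-by-$x_n$ on $A$; its roots in $\overline{K}$ are the values $x_n(p_1),\ldots,x_n(p_d)$, which are now distinct, so $\deg\mu = d = \dim_K A$, and therefore $1,\,\bar x_n,\ldots,\bar x_n^{\,d-1}$ is a $K$-basis of $A$. Hence each $x_i$ with $i<n$ is congruent modulo $I$ to a unique polynomial $h_i(x_n)$ of degree $<d$; set $g_i := x_i - h_i(x_n)$ for $i<n$ and $g_n := \mu(x_n) \in I$. These lie in $I$ by construction, and they generate $I$ by a dimension count: with $J := \langle g_1,\ldots,g_n\rangle \subseteq I$, eliminating $x_1,\ldots,x_{n-1}$ via the first $n-1$ relations gives $K[x_1,\ldots,x_n]/J \cong K[x_n]/(g_n)$ of $K$-dimension $d$, and a surjection $K[x]/J \twoheadrightarrow K[x]/I$ between $K$-vector spaces of equal finite dimension is an isomorphism, so $J = I$.

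It then remains to see that $(g_1,\ldots,g_n)$ is the reduced Gröbner basis for lex with $x_1 > \cdots > x_n$. In this order $\LT(g_i) = x_i$ for $i<n$ and $\LT(g_n) = x_n^{d}$, so $\langle \LT(g_1),\ldots,\LT(g_n)\rangle = \langle x_1,\ldots,x_{n-1},x_n^{d}\rangle$, whose standard monomials are exactly $1,x_n,\ldots,x_n^{d-1}$ --- precisely $d$ of them; since the standard monomials of $\LT(I)$ form a $K$-basis of $A$ there can be no more, forcing $\LT(I) = \langle x_1,\ldots,x_{n-1},x_n^{d}\rangle$, i.e.\ $(g_1,\ldots,g_n)$ is a Gröbner basis. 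Reducedness is immediate, since each $g_i$ is monic in its leading term and every non-leading monomial occurring (a pure power $x_n^{k}$, $k<d$) is divisible by none of $x_1,\ldots,x_{n-1},x_n^{d}$. I expect the only real subtlety to be the separating-form step: making the cardinality bound on $K$ explicit, and being careful that radicality is genuinely preserved under the relevant scalar extension (which is exactly why perfectness, and hence the hypothesis on $K$, is needed); the remaining steps are routine bookkeeping with dimensions and leading terms.
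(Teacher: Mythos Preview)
The paper does not prove this proposition; it simply quotes it from the literature (\cite[Proposition~1.6]{ShapeLemma89}) and uses it as background motivation for restricting the dataset to shape-position Gr\"obner bases. Your argument is correct and is essentially the standard proof of the Shape Lemma: reduce to finding a linear form that separates the finitely many geometric points of $V(I)$ (using perfectness of $K$ to keep $I$ radical over $\overline{K}$, and infinitude or large cardinality of $K$ to avoid the finitely many bad hyperplanes), then use that form to make $A=K[x]/I$ a cyclic $K[x_n]$-module and read off the generators $x_i-h_i(x_n)$ and $g_n(x_n)$, and finally confirm they form the reduced lex Gr\"obner basis by the dimension count on standard monomials. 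There is nothing to compare against in the paper itself.
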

We call a \gr\ basis in the above form a \textit{shape position \gr\ basis}.
The set of all shape position \gr\ basss of degree $D$ is parameterized by the affine space $(\mathbb{A}^{D}_K)^{n}$ (parameterizing the coefficients of the non-leading terms of the $g_i$'s). Thus we can easily construct a large dataset of shape position \gr\ bases following a given distribution.

Throughout this paper, we are supposed to have an algorithm for producing a large dataset $\mathcal{G}$ consisting of \gr\ bases contained in a class of the target of the model. It follows that we only focus on algorithms for constructing sets of generators $F$ such that $\langle F \rangle = \langle G \rangle$ for a given element $G \in \mathcal{G}$.
The basic idea in \cite{KIKVY24} for constructing such an algorithm is the following.
Let  \[ G = (g_1,\ldots,g_n)^{T}=\begin{pmatrix} g_1\\
\vdots \\
g_n \end{pmatrix} \in R^{n \times 1} \]
be a tuple of polynomials.
If another tuple of polynomials
\[ F = (f_1,\ldots, f_m)^{T} = \begin{pmatrix} f_1\\
\vdots \\
f_m \end{pmatrix}  \in R^{m \times 1} \]
satisfies $f_i \in \langle G \rangle$ for any $i$, 
then there exists a polynomial matrix $A \in R^{m \times n}$ with
\[ F = AG. \]
In this setting, if $A$ has a left inverse matrix $B \in R^{n \times m}$, then $F$ is a set of generators of $\langle G \rangle$ since $BF = BAG = G$. Let us work on such matrices.

\begin{definition}\label{def:left_regular}
We say a matrix $A \in R^{m \times n}$ is a \textit{left regular matrix} over $R$ if there exists a left inverse matrix $B \in R^{n \times m}$. We simply say $A$ is \textit{regular} over $R$ if $A$ is left regular and $m=n$.
\end{definition}

To construct a left regular matrix $A \in R^{m \times n}$ over $R$ efficiently, one can consider the Bruhat-like decomposition
\[ A = U_1 S \left(\begin{array}{c} U_2 \\ \hline O_{(m-n) \times n} \end{array}\right).\]
Here, $U_1 \in R^{m \times m}$ and $U_2 \in R^{n \times n}$ are upper triangle matrices with diagonal entries all 1, $O_{(m-n)\times n}$ is the zero matrix of size $(m-n) \times n$, and $S$ is a permutation square matrix of size $m$.
Clearly $A$ has a left inverse matrix
\[ B = \left(U_2^{-1} | O_{n \times (m-n)} \right)S^{-1} U_1^{-1}.\]
This matrix $B$ is in $R^{n \times m}$ since $\det U_1 = \det U_2 = \det S =  1$.
Then the matrix $A$ is left regular over $R$.

In this paper, we first consider the generalization of the above.
If $V_1 \in R^{m \times m}$ and $V_2 \in R^{n \times n}$ are regular matrices over $R$, then
\[ A = V_1 \left(\begin{array}{c} V_2 \\ \hline O_{(m-n) \times n} \end{array}\right) \]
is a left regular matrix over $R$. Moreover, from the transformation
\[ A = V_1 \left(\begin{array}{c|c} V_2 & O_{n \times (m-n)} \\ \hline O_{(m-n) \times n} & E_{(m-n) \times (m-n)} \end{array}\right) \left(\begin{array}{c} E_n \\ \hline O_{(m-n) \times n} \end{array}\right), \]
it is enough to consider the case of $V_2 = E_n$, the identity matrix of size $m\times m$. Hence, the problem of constructing left regular matrices $A \in R^{m \times n}$ is reduced to find regular matrices $V \in R^{m \times m}$. Note that we will show that a polynomial matrix $A\in R^{m \times n}$ is a left regular matrix over $R$ if and only if $A$ is in the above form (Proposition \ref{prop:set_of_outputs}).
Now, let us clarify the set of all regular matrices over $R$ of size $m$.

\begin{proposition}\label{prop:regular_det}
A square matrix $V \in R^{m \times m}$ is a regular matrix over $R$ if and only if $\det V \in K \setminus \{0\}$. 
\end{proposition}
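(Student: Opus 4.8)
The plan is to reduce everything to the multiplicativity of the determinant over the commutative ring $R$ together with the description of the units of $R$. Recall first that $R = K[x_1,\ldots,x_r]$ is an integral domain, and its group of units is exactly $K \setminus \{0\}$: if $fg = 1$ in $R$, then since $R$ is a domain total degrees are additive, so $\deg f + \deg g = \deg 1 = 0$, forcing $\deg f = \deg g = 0$, i.e. $f, g \in K$ with $f \neq 0$. I would record this as a one-line observation before the main argument.

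For the ``only if'' direction, assume $V$ is regular, so there is a left inverse $B \in R^{m \times m}$ with $B V = E_m$. Applying $\det$, which is multiplicative over the commutative ring $R$, gives $\det(B)\det(V) = \det(E_m) = 1$. Hence $\det V$ is a unit of $R$, and by the observation above $\det V \in K \setminus \{0\}$.

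For the ``if'' direction, assume $\det V = c \in K \setminus \{0\}$. Since $K \subseteq R$, the element $c$ is invertible in $R$, and the adjugate identity
\[
\operatorname{adj}(V)\, V \;=\; V\, \operatorname{adj}(V) \;=\; \det(V)\, E_m \;=\; c\, E_m
\]
holds over any commutative ring. Setting $B := c^{-1}\operatorname{adj}(V) \in R^{m \times m}$ yields $B V = E_m$ (indeed also $V B = E_m$), so $B$ is a left inverse of $V$ over $R$; thus $V$ is regular in the sense of Definition \ref{def:left_regular}.

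\textbf{Main obstacle.} There is essentially none: the argument is elementary commutative algebra. The only points worth stating explicitly are (i) that one-sided invertibility of a \emph{square} matrix over the commutative ring $R$ is as good as two-sided invertibility — which the determinant computation above makes transparent — and (ii) that the units of $R$ are precisely the nonzero constants, handled by the degree argument. One may optionally remark in passing that the same reasoning shows any square matrix over $R$ admitting a one-sided inverse in fact has a two-sided inverse equal to it, so ``regular'' coincides with ``invertible over $R$''.
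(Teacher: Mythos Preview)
Your proof is correct and follows essentially the same argument as the paper: multiplicativity of the determinant to show $\det V$ is a unit of $R$ (hence in $K\setminus\{0\}$), and the adjugate formula $V^{-1}=(\det V)^{-1}\operatorname{adj}(V)$ for the converse. Your added justification that the units of $R$ are exactly $K\setminus\{0\}$ via the degree argument is a welcome bit of extra detail, but the approach is identical.
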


\begin{proof}
If $V$ is regular, then there exists a polynomial matrix $U \in R^{m \times m}$ such that $UV = E_m$. Then the determinant $\det V$ is a unit of $R$. Since the set of units of $R$ is the set $K \setminus \{0\}$, the determinant $\det V$ is in $K \setminus \{0\}$. Conversely, if $\det V \in K \setminus \{0\}$, then the inverse $V^{-1} = (\det V)^{-1} \tilde{V}$, where $\tilde{V} \in R^{m \times m}$ is the adjugate matrix of $V$, is in $R^{m \times m}$. It implies that $V$ is regular over $R$.
\end{proof}

Let $V \in R^{m \times m}$ be a regular matrix of size $m$. Consider the matrix
\[ U = ((\det V)^{-1}\bm{e}_1,\bm{e}_2,\ldots,\bm{e}_m) V,\]
where $\bm{e}_i$ is the $i$-th elementary vector in $R^m$. 
It is an element of the special linear group
\[ \mathrm{SL}(R^m) := \{ U \in R^{m \times m} \mid \det U = 1 \}\]
over $R$. It follows that 
\[ \{ V \in R^{m \times m} \mid \text{$V$ is regular over $R$}\} = \langle \mathrm{GL}(K^m), \mathrm{SL}(R^m) \rangle \]
 in $R^{m \times m}$, where
\[ \mathrm{GL}(K^m) := \{ C \in K^{m \times m} \mid \det C \neq 0 \}\]
is the general linear group over $K$, and $\langle \mathrm{GL}(K^m), \mathrm{SL}(R^m) \rangle$ is the subgroup generated by $\mathrm{GL}(K^m)$ and $\mathrm{SL}(R^m)$ in $R^{m \times m}.$
To describe this subgroup in detail, we recall Suslin's stability theorem claiming that elementary matrices generate it.

\begin{definition}\label{def:elementary_matrix}
Let $P$ be a $m\times m$ matrix over $R$. 
We say $P$ is an \textit{elementary} matrix of size $m$ if it has one of the following forms.
\begin{itemize}
	\item A row permutation matrix:
 \[ P = (\bm{e}_1,\ldots,\bm{e}_j,\ldots,\bm{e}_i,\ldots,\bm{e}_m)
\]
for some $i<j$.
	\item A row multiplication matrix for an element of $K \setminus \{0\}$:
\[ P = (\bm{e}_1,\ldots, c \bm{e}_i, \ldots, \bm{e}_m)\]
for some $c \in K \setminus \{0\}$ and $i = 1,\ldots,m$.
	\item A row addition matrix for an element of $R$:
 \[ P = (\bm{e}_1,\ldots, \bm{e}_i+f\bm{e}_j, \ldots, \bm{e}_m)\]
 for some $f \in R$ and $i,j = 1,\ldots, m$ such that $i\neq j$.
\end{itemize}
We denote by $\mathbb{E}(m)$ the subgroup of $R^{m \times m}$ generated by all elementary matrices of size $m$.
\end{definition}

\begin{theorem}[Suslin's stability theorem  \cite{Suslin77}]\label{thm:Suslin's_stability_theorem}
If $m \geq 3$, then
\[ \mathbb{E}(m) = \langle \mathrm{GL}(K^m),\mathrm{SL}(R^m) \rangle.\]
\end{theorem}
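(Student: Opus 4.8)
The plan is to prove the two inclusions separately, isolating the elementary content from the one genuinely deep ingredient. The inclusion $\mathbb{E}(m)\subseteq\langle\mathrm{GL}(K^m),\mathrm{SL}(R^m)\rangle$ is immediate once we check the three generator types of Definition \ref{def:elementary_matrix}: a row permutation matrix and a row multiplication matrix for $c\in K\setminus\{0\}$ have all entries in $K$ and nonzero determinant ($\pm 1$ and $c$ respectively), hence lie in $\mathrm{GL}(K^m)$; a matrix of the third (row addition) type has entries in $R$ and determinant $1$, hence lies in $\mathrm{SL}(R^m)$. This direction needs no hypothesis on $m$.

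For the reverse inclusion it suffices to show $\mathrm{GL}(K^m)\subseteq\mathbb{E}(m)$ and $\mathrm{SL}(R^m)\subseteq\mathbb{E}(m)$. The first is Gaussian elimination over the field $K$: every $C\in\mathrm{GL}_m(K)$ is a product of row operations of the three types in Definition \ref{def:elementary_matrix} (with the scalar in the addition type taken in $K\subseteq R$), so $C\in\mathbb{E}(m)$; again $m\ge 1$ suffices. Since the row addition matrices $E_m+f\,\bm{e}_i\bm{e}_j^{T}$ ($i\neq j$, $f\in R$) already lie in $\mathbb{E}(m)$, the second inclusion follows once we establish that every determinant-one matrix over $R=K[x_1,\ldots,x_r]$ is a product of such transvections, i.e. $\mathrm{SL}_m(R)=\mathrm{E}_m(R)$ for $m\ge 3$, where $\mathrm{E}_m(R)$ denotes the subgroup of $\mathrm{SL}_m(R)$ they generate. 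This is exactly Suslin's 1977 theorem, and it is the only step that actually uses $m\ge 3$ (for $m=2$ the equality fails, e.g. by Cohn's classical example over $K[x,y]$).

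To prove $\mathrm{SL}_m(R)=\mathrm{E}_m(R)$ one argues by induction on the number $r$ of variables, the base case $r=0$ being Gaussian elimination over the field $K$. For the inductive step, write $R=A[x_r]$ with $A=K[x_1,\ldots,x_{r-1}]$; given $\sigma\in\mathrm{SL}_m(A[x_r])$, set $\tau(x_r)=\sigma(x_r)\,\sigma(0)^{-1}$, so that $\tau(0)=E_m$ and, by the inductive hypothesis, $\sigma(0)\in\mathrm{E}_m(A)\subseteq\mathrm{E}_m(R)$; hence it remains to put $\tau$ in $\mathrm{E}_m(R)$. The key tool is Suslin's local--global principle: for $m\ge 3$, a matrix $\tau(x)\in\mathrm{SL}_m(A[x])$ with $\tau(0)=E_m$ lies in $\mathrm{E}_m(A[x])$ as soon as its image in $\mathrm{SL}_m(A_{\mathfrak p}[x])$ lies in $\mathrm{E}_m(A_{\mathfrak p}[x])$ for every prime $\mathfrak p$ of $A$. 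This reduces matters to polynomial rings over the local rings $A_{\mathfrak p}$, where one combines Horrocks-type patching, the completability of unimodular rows of length $\ge 3$ (the mechanism underlying the Quillen--Suslin theorem, Theorem \ref{thm:Serre_conj}), and the normality of $\mathrm{E}_m$ in $\mathrm{GL}_m$ for $m\ge 3$ to rewrite $\tau_{\mathfrak p}$ as a product of transvections.

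The main obstacle is precisely this last step: the elementary reductions above are routine, but establishing Suslin's local--global principle together with the local base case is the entire substance of the theorem, and a self-contained proof would require developing a nontrivial amount of $K_1$-theory (Mennicke symbols, normality of the elementary subgroup, Quillen patching for the relevant functors). Accordingly, in the body of the paper we take the elementary reductions for granted and cite \cite{Suslin77} for the identity $\mathrm{SL}_m(R)=\mathrm{E}_m(R)$.
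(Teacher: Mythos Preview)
Your proposal is correct and aligns with the paper's treatment: the paper states this theorem with a bare citation to \cite{Suslin77} and no proof, and your write-up correctly isolates the elementary reductions (each generator type of $\mathbb{E}(m)$ lies in $\mathrm{GL}(K^m)\cup\mathrm{SL}(R^m)$; conversely $\mathrm{GL}(K^m)\subseteq\mathbb{E}(m)$ by Gaussian elimination) before invoking Suslin's deep result $\mathrm{SL}_m(R)=\mathrm{E}_m(R)$ for $m\ge 3$. Since the paper itself offers no argument beyond the citation, your sketch is strictly more informative while remaining faithful to the intended approach.
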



Note that, of course, even in the case of $m < 3$, we have
\[ \mathbb{E}(m) \subset \langle \mathrm{GL}(K^m),\mathrm{SL}(R^m) \rangle. \]
How many matrices of $\mathbb{E}(m)$ are enough
to obtain a given element of $\langle \mathrm{GL}(K^m),\mathrm{SL}(R^m) \rangle$ as their product has been
studied in \cite{Caniglia93} with explicit bounds.

These observations lead us to a new algorithm to construct random sets of generators of the given ideal $\langle G \rangle$ with Algorithm \ref{alg:main_algorithm}. 
When picking at random in the algorithm, we mean according to any
reasonable distribution (\textit{e.g.} non-atomic, uniform $\dots$).

\begin{algorithm}
    \DontPrintSemicolon
	\SetKwInOut{Input}{input}\SetKwInOut{Output}{output}
     \caption{Algorithm to construct random generators of $\langle G \rangle$}
    \label{alg:main_algorithm}
    \Input{$G = (g_1,\ldots, g_n)^{T} \in R^{n \times 1},$ $m \geq n.$}
    \Output{$F = (f_1,\ldots,f_m)^{T} \in R^{m \times 1}$ such that $\langle F \rangle = \langle G \rangle.$}
    \BlankLine
    Pick a random integer $s \geq 1$ ; \;
    Pick random elementary matrices $U_1,U_2,\ldots,U_s$ of size $m$ ; \;
    Compute the product $U = U_1 U_2 \cdots U_s$;\;
    Compute the matrix
    \[ A = U \left(\begin{array}{c} E_n \\ \hline O_{(m-n) \times n} \end{array}\right); \]  \;
    \Return $F = AG$ \;
\end{algorithm}


\begin{proposition}\label{prop:main_algorithm}
Algorithm \ref{alg:main_algorithm} is correct. In other words, For any input $G$, the output $F$ generates the ideal  $\langle G\rangle$.
\end{proposition}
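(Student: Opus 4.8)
The plan is to establish the two ideal inclusions $\langle F\rangle\subseteq\langle G\rangle$ and $\langle G\rangle\subseteq\langle F\rangle$ by producing an explicit left inverse of the matrix $A$ constructed by the algorithm; this is exactly the mechanism sketched in the paragraph preceding Definition~\ref{def:left_regular}, and the whole point is that the matrix $U=U_1U_2\cdots U_s$ is invertible over $R$ with inverse again in $R^{m\times m}$.

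First I would record that each elementary matrix of size $m$, in the sense of Definition~\ref{def:elementary_matrix}, is invertible with inverse again an elementary matrix of the same size: a row permutation matrix is its own inverse, the row multiplication matrix for $c\in K\setminus\{0\}$ has as inverse the row multiplication matrix for $c^{-1}\in K\setminus\{0\}$, and the row addition matrix for $f\in R$ has as inverse the row addition matrix for $-f\in R$. Since $\mathbb{E}(m)$ is a group, it follows that $U=U_1\cdots U_s\in\mathbb{E}(m)$ and that $U^{-1}=U_s^{-1}\cdots U_1^{-1}\in\mathbb{E}(m)\subseteq R^{m\times m}$.

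Next I would write down the left inverse of $A=U\bigl(\begin{smallmatrix}E_n\\ O_{(m-n)\times n}\end{smallmatrix}\bigr)$. Setting $B:=(E_n\mid O_{n\times(m-n)})\,U^{-1}\in R^{n\times m}$, we compute
\[
BA=(E_n\mid O_{n\times(m-n)})\,U^{-1}U\left(\begin{array}{c}E_n\\ \hline O_{(m-n)\times n}\end{array}\right)=(E_n\mid O_{n\times(m-n)})\left(\begin{array}{c}E_n\\ \hline O_{(m-n)\times n}\end{array}\right)=E_n,
\]
so $A$ is a left regular matrix over $R$ in the sense of Definition~\ref{def:left_regular}, with left inverse $B$ having polynomial entries. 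From here the conclusion is immediate: because $F=AG$, each coordinate $f_i$ of $F$ is an $R$-linear combination of $g_1,\dots,g_n$, whence $\langle F\rangle\subseteq\langle G\rangle$; and because $BF=BAG=G$, each coordinate $g_j$ of $G$ is an $R$-linear combination of $f_1,\dots,f_m$, whence $\langle G\rangle\subseteq\langle F\rangle$. Thus $\langle F\rangle=\langle G\rangle$, which is the asserted correctness.

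As for difficulty, there is essentially no obstacle here, and in particular Suslin's stability theorem (Theorem~\ref{thm:Suslin's_stability_theorem}) is not needed for correctness; it will matter only later, for the density analysis. The one point deserving a sentence of care is that $U^{-1}$ must be shown to lie in $R^{m\times m}$ and not merely in the matrix ring over the fraction field $K(x_1,\dots,x_r)$, and this is precisely what the first step handles — it is the distinction between "left regular over $R$" and "invertible after inverting denominators." Finally, I would emphasize that the statement concerns correctness only, hence it is completely independent of the distributions used to sample $s$ and the $U_i$.
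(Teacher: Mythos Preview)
Your proof is correct and follows essentially the same approach as the paper: construct the explicit left inverse $B=(E_n\mid O_{n\times(m-n)})U^{-1}\in R^{n\times m}$, verify $BA=E_n$, and conclude $\langle F\rangle=\langle G\rangle$. Your version is simply more detailed, spelling out why $U^{-1}$ has entries in $R$ and making the two ideal inclusions explicit, whereas the paper compresses this into two sentences.
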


\begin{proof}
Since a product of any elementary matrix is regular over $R$, the matrix $U$ in step 3 is regular over $R$. Then $A$ is also left regular over $R$ since it holds that
\[ B = (E_n | O_{n\times(m-n)})U^{-1} \in R^{n \times m}\]
and $BA = E_n$, which forces $\langle F \rangle = \langle G \rangle$.
\end{proof}

\section{Density of the outputs}\label{sec:Density}

The goal of this section is to show that 
\[ \mathcal{F}_0 = \left\{ \left. U \left(\begin{array}{c} E_n \\ \hline O_{(m-n) \times n} \end{array}\right)G \right| U \in \mathbb{E}(m) \right\} \]
is dense in
\[ \mathcal{F} =\{ F \in R^m \mid \langle F \rangle = \langle G \rangle \}.\]

\subsection{Algebraic aspects of \texorpdfstring{$\mathcal{F}$}{F} and  \texorpdfstring{$\mathcal{F}_0$}{F0}}

First, we show Proposition \ref{prop:set_of_outputs}, asserting that $\mathcal{F}_0$ is the same as the set of generators $F = AG$ whose $A$ is a left regular matrix over $R$.

\begin{theorem}[Quillen-Suslin theorem, also known as Serre's conjecture]\label{thm:Serre_conj}
A finitely generated projective $R$-module is free.
\end{theorem}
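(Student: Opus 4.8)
The plan is to prove this as the Quillen--Suslin theorem, following Quillen's approach: an induction on the number of variables $r$ that at each stage is reduced, via a local--global principle, to a statement about projective modules over $A[x]$ with $A$ local.

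First I would run the induction on $r$. For $r=0$ the ring $R=K$ is a field, so every finitely generated projective module is free. For the inductive step, write $R = R'[x_r]$ with $R' = K[x_1,\dots,x_{r-1}]$ and let $P$ be a finitely generated projective $R$-module. It suffices to show that $P$ is \emph{extended} from $R'$, meaning $P \cong P_0 \otimes_{R'} R$ where $P_0 := P/x_r P$, viewed as an $R'$-module via $R/x_r R \cong R'$. Granting this, $P_0$ is a finitely generated projective $R'$-module, hence free by the induction hypothesis, so $P \cong (R')^{\,t} \otimes_{R'} R = R^{\,t}$ is free.

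The heart of the argument is then Quillen's patching theorem: a finitely generated $R'[x]$-module $M$ is extended from $R'$ if and only if, for every maximal ideal $\mathfrak m$ of $R'$, the localization $M_{\mathfrak m}$ is extended from $R'_{\mathfrak m}$. I would establish this by showing that $\{\, s \in R' : M_s \text{ is extended from } R'_s \,\}$ is an ideal of $R'$, which in turn rests on an elementary patching lemma gluing two modules that agree after inverting complementary elements. This reduces the problem to: for a local ring $A$ (here $A = R'_{\mathfrak m}$), every finitely generated projective $A[x]$-module is extended from $A$; and since $A$ is local, being extended is the same as being free.

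The remaining local statement is Horrocks' theorem: if $A$ is a Noetherian local ring and $P$ is a finitely generated projective $A[x]$-module with $P_f$ free over $A[x]_f$ for some monic $f \in A[x]$, then $P$ is free over $A[x]$. To invoke it one must first secure the monic-polynomial hypothesis, which is arranged by Suslin's change-of-variable trick (with a supplementary device to handle a finite residue field) together with the fact that $P$ becomes free after inverting some nonzero element of $A[x]$. I expect this local case — Horrocks' theorem and the reduction producing a monic polynomial — to be the main obstacle; the induction and the patching theorem are comparatively formal once the elementary patching lemma is available. An alternative for the hard step is Suslin's own argument: reduce to a stably free module (using $K_0(R)=\mathbb{Z}$), present it as the kernel of a unimodular row, and complete that row to an invertible matrix over $K[x_1,\dots,x_r]$ — consistent with the appearance of elementary matrices and Suslin's stability theorem elsewhere in this paper.
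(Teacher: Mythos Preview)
Your outline is a faithful high-level sketch of Quillen's proof of the Quillen--Suslin theorem (induction on the number of variables, Quillen's patching/local--global principle, and Horrocks' theorem for the local step), and it correctly flags the monic-polynomial reduction as the delicate point. There is no mathematical gap at the level of a sketch; a complete write-up would of course need the elementary patching lemma and a careful treatment of Horrocks' theorem, but the architecture is right.

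However, the paper does not prove this statement at all: its entire proof is the single line ``See \cite[XXI Theo.~3.7]{Lang:Algebra}.'' The authors use the Quillen--Suslin theorem purely as a black box (it is invoked only in the proof of Proposition~\ref{prop:set_of_outputs} to conclude that a certain projective kernel is free). So your proposal goes far beyond what the paper does. That said, the reference they cite --- Lang's \emph{Algebra}, Chapter~XXI --- presents essentially the Quillen argument you describe (Horrocks locally, then Quillen patching, then induction), so your sketch is in fact an expansion of the proof behind their citation rather than a genuinely different route. The alternative you mention at the end (Suslin's approach via completion of unimodular rows) would also work and, as you note, resonates nicely with the paper's later use of Suslin's stability theorem, but it is not what the cited reference does.
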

\begin{proof}
    See \cite[XXI Theo. 3.7]{Lang:Algebra}
\end{proof}

\begin{proposition}\label{prop:set_of_outputs}
For any left regular matrix $A \in R^{m \times n}$ with $m \geq n \geq 3$, there exists an element $U \in \mathbb{E}(m)$ such that
\[ A = U \left(\begin{array}{c} E_n \\ \hline O_{(m-n) \times n} \end{array}\right). \]
\end{proposition}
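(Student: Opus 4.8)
The plan is to translate the statement into a splitting question for a short exact sequence of $R$-modules and then invoke the Quillen--Suslin theorem (Theorem~\ref{thm:Serre_conj}) together with Suslin's stability theorem (Theorem~\ref{thm:Suslin's_stability_theorem}). Since $A \in R^{m \times n}$ is left regular, there is $B \in R^{n \times m}$ with $BA = E_n$; in particular the $R$-linear map $R^n \xrightarrow{A} R^m$ (thinking of $A$ as acting on column vectors, or equivalently as the matrix of a map between free modules) is a split injection, with splitting given by $B$. Hence $R^m \cong \operatorname{im}(A) \oplus \ker(B)$, and $\ker(B)$ is a direct summand of the free module $R^m$, so it is a finitely generated projective $R$-module. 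By Theorem~\ref{thm:Serre_conj} it is therefore free, of rank $m-n$ by counting ranks. Choosing a basis of $\ker(B)$ and writing it as the columns of a matrix $A' \in R^{m \times (m-n)}$, the block matrix $W := (A \mid A') \in R^{m \times m}$ describes an isomorphism $R^m \xrightarrow{\sim} R^m$, hence lies in $\mathrm{GL}(R^m)$.

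Next I would pin down the determinant of $W$ so as to land inside the group to which Suslin's theorem applies. A priori $\det W$ is a unit of $R$, i.e. an element of $K \setminus \{0\}$; by rescaling one column of $A'$ by $(\det W)^{-1} \in K \setminus \{0\}$ (which does not change the property that the columns of $A'$ form a basis of $\ker B$, and only alters $W$ by right-multiplication by an elementary row-multiplication matrix) we may assume $\det W = 1$, so $W \in \mathrm{SL}(R^m)$. In any case $W \in \langle \mathrm{GL}(K^m), \mathrm{SL}(R^m)\rangle$, and since $m \geq 3$, Suslin's stability theorem (Theorem~\ref{thm:Suslin's_stability_theorem}) gives $W \in \mathbb{E}(m)$. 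Finally, observe that
\[ W \left(\begin{array}{c} E_n \\ \hline O_{(m-n)\times n} \end{array}\right) = (A \mid A') \left(\begin{array}{c} E_n \\ \hline O_{(m-n)\times n} \end{array}\right) = A, \]
so $U := W$ is the required element of $\mathbb{E}(m)$.

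The main subtlety — and the step to be careful about — is the bookkeeping around $\ker B$ versus the cokernel of $A$. One wants $R^m = \operatorname{im}(A) \oplus \ker(B)$ as a genuine internal direct sum decomposition, which follows from $BA = E_n$: the idempotent $AB \in R^{m \times m}$ has image $\operatorname{im}(A)$ and the complementary idempotent $E_m - AB$ has image $\ker(B)$; checking $\operatorname{im}(A)$ is free on the columns of $A$ (immediate since $A$ is injective with a one-sided inverse, so its columns are an $R$-basis of $\operatorname{im}(A)$) and that $\ker(B) = \operatorname{im}(E_m - AB)$ is projective is the heart of the argument, with Quillen--Suslin upgrading ``projective'' to ``free.'' Everything else is the elementary verification above that the assembled $W$ has the right shape and determinant. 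Note the hypothesis $m \geq n \geq 3$ is used only through $m \geq 3$ for Suslin's theorem; the case $n < 3$ is not needed here.
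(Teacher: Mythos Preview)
Your argument is correct and in the same spirit as the paper's---both hinge on Quillen--Suslin (Theorem~\ref{thm:Serre_conj}) to make the complementary summand free and on Suslin's stability (Theorem~\ref{thm:Suslin's_stability_theorem}) to land in $\mathbb{E}(m)$---but the constructions differ. The paper works with the \emph{left inverse} $B$: it compares the split exact sequence $0 \to \ker(\varphi_B) \to R^m \to R^n \to 0$ with the standard projection via $\mathrm{Ext}^1_R(R^n,\ker(\varphi_B))=0$, extracts a factorization $B = V^{-1}(E_n \mid O)U$ with $U \in \mathbb{E}(m)$ and $V \in \mathbb{E}(n)$, and then reassembles $A$ from block identities. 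You instead work directly with $A$: complementing its columns by a free basis of $\ker(B)$ yields an invertible $W=(A\mid A')\in \mathrm{GL}(R^m)$ in one stroke, and Suslin's stability finishes. Your route is shorter and, as you correctly observe, uses only $m\ge 3$; the paper's presentation invokes $V\in\mathbb{E}(n)$ and hence nominally $n\ge 3$, though this can be avoided by the same determinant argument you give. The paper's detour does buy an explicit description of $B$ in elementary form, which is not needed for the statement itself.
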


\begin{proof}
Let $B \in R^{n \times m}$ be a matrix such that $BA = E_n$. Let $\varphi_A : R^n \rightarrow R^m$ and $\varphi_B : R^m \rightarrow R^n$ be the $R$-module morphisms defined by  $\varphi_A(C) = AC$ and $\varphi_B(C) = BC.$ The assumption on $B$ implies that the sequence
\begin{equation}\label{eq:sequence}
    0 \rightarrow \mathrm{Ker}(\varphi_B) \rightarrow R^m \overset{\varphi_B}{\rightarrow} R^n \rightarrow 0
\end{equation}
is a split exact sequence. The kernel $\mathrm{Ker}(\varphi_B)$ is projective. Indeed, this is a direct summand of the free module $R^m$. Then, by Theorem \ref{thm:Serre_conj}, $\mathrm{Ker}(\mathrm{\varphi_B})$ is free. Moreover, since $R^n$ is projective over $R$, we have
\[ \mathrm{Ext}^1_R(R^n, \mathrm{Ker}(\varphi_B))=0.\]
It follows that $\mathrm{Ker}(\varphi_B)$ is isomorphic to the free module $R^{(m-n)}$ and the sequence \eqref{eq:sequence} corresponds to the zero of the module $\mathrm{Ext}^1_R(R^n, R^{(m-n)})$, which forces that 
\[ 0 \rightarrow R^{(m-n)} \rightarrow R^n \stackrel{\pr_n}{\to} R^m \rightarrow 0, \]
where $\pr_n$ is the projection to the first $n$ coordinates. Hence there exist isomorphisms $\rho : R^m \rightarrow R^m$ and $\tau : R^n \rightarrow R^n$ of $R$-modules making the diagram
\[
\xymatrix{
R^m \ar[r]^{\varphi_B} \ar[d]_{\rho} & R^n \ar[d]_{\tau} \\
R^m \ar[r]^{\pr_n} & R^n
}
\]
commutative. Let $U$, $V$, $U'$, $V'$ be matrices that represent $\rho$, $\tau$, $\rho^{-1}$, $\tau^{-1}$ respectively. These are matrices with entries in $R$ since $\rho^{-1}$ and $\tau^{-1}$ are still $R$-module morphisms. Then we have $\det(U), \det(V) \in R^{\times} = K \setminus \{0\}$ from Proposition \ref{prop:regular_det}. Therefore we obtain
\[ B = V^{-1} (E_n|O_{n \times (m-n)}) U\quad (V \in \mathbb{E}(n),\ U \in \mathbb{E}(m)) \]
from Theorem \ref{thm:Suslin's_stability_theorem}. Taking a block matrix representation
\[ UA = \left(\begin{array}{c} A_1 \\ \hline A_2 \end{array} \right) \]
for $A_1 \in R^{n \times n}$ and $A_2 \in R^{(m-n) \times n}$, we have
\[ E_n = BA = BU^{-1} UA = V^{-1}A_1. \]
It implies that
\[ A = U^{-1} \left( \begin{array}{c} V \\ \hline A_2 \end{array} \right) = U^{-1} \left( \begin{array}{c|c} V & O_{n \times (m-n)} \\ \hline A_2 & E_{(m-n)\times (m-n)}\end{array} \right)\left( \begin{array}{c} E_n \\ \hline O_{(m-n) \times n } \end{array} \right).\]
By the row reduction, we can show that the matrix 
\[U^{-1} \left( \begin{array}{c|c} V & O_{n \times (m-n)} \\ \hline A_2 & E_{(m-n)\times (m-n)}\end{array} \right)\]
is an element in $\mathbb{E}(m)$.
\end{proof}

From Proposition \ref{prop:set_of_outputs}, we obtain another representation of the subset $\mathcal{F}_0$ of the all possible outputs of Algorithm \ref{alg:main_algorithm} as
\[ \begin{aligned} \mathcal{F}_0 &= \left\{ \left. U \left(\begin{array}{c} E_n \\ \hline O_{(m-n) \times n} \end{array}\right)G \right| U \in \mathbb{E}(m) \right\}\\
&= \{ F=AG \in \mathcal{F} \mid A \in R^{m \times n}\ \text{is left regular over $R$}\}. \end{aligned} \]
We will use this representation to construct a dense subset $\mathcal{V} \subset \mathcal{F}_0$ consisting of elements of the form $F =AG$ with left regular matrices $A$ (Proposition \ref{prop:enough_condition_of_output}, Theorem \ref{thm:density_of_outputs}).
Next, we analyze $\mathcal{F} = \{ F \in R^{m \times 1}\mid \langle F \rangle = \langle G \rangle \}$.

\begin{proposition}\label{prop:gene_iff}
Let $F = AG$ for $A \in R^{m \times n}$. The tuple of polynomials $F$ is a generator of $\langle G \rangle$ if and only if there exists a matrix $B \in R^{n \times m}$ such that $(BA-E_n)G = O_{n \times 1}$.
\end{proposition}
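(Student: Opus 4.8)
The plan is to convert the ideal equality $\langle F\rangle=\langle G\rangle$ into a matrix identity over $R$, using the fact that the hypothesis $F=AG$ already supplies one of the two inclusions for free. Concretely, since $F=AG$ each component $f_i$ is an $R$-linear combination of $g_1,\ldots,g_n$, so $\langle F\rangle\subseteq\langle G\rangle$ holds unconditionally. Therefore $F$ generates $\langle G\rangle$ if and only if the reverse inclusion $\langle G\rangle\subseteq\langle F\rangle$ holds.

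Next I would record the standard dictionary: $\langle G\rangle\subseteq\langle F\rangle$ means precisely that every $g_j$ lies in $\langle f_1,\ldots,f_m\rangle$, i.e.\ there are polynomials $b_{j1},\ldots,b_{jm}\in R$ with $g_j=\sum_{i=1}^m b_{ji}f_i$. Assembling the $b_{ji}$ into the rows of a matrix $B\in R^{n\times m}$, this is equivalent to the existence of $B\in R^{n\times m}$ with $G=BF$. Substituting $F=AG$ gives $G=BAG$, that is, $(BA-E_n)G=O_{n\times 1}$.

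Both implications then fall out immediately. If some $B$ satisfies $(BA-E_n)G=O_{n\times 1}$, then $BF=BAG=G$, so $\langle G\rangle\subseteq\langle F\rangle$, and together with $\langle F\rangle\subseteq\langle G\rangle$ this yields $\langle F\rangle=\langle G\rangle$. Conversely, if $F$ generates $\langle G\rangle$, then in particular $\langle G\rangle\subseteq\langle F\rangle$, which by the dictionary above produces $B\in R^{n\times m}$ with $G=BF=BAG$, hence $(BA-E_n)G=O_{n\times 1}$.

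I do not anticipate a genuine obstacle here; the argument is a routine translation between ideal membership for a tuple of polynomials and left multiplication by a polynomial matrix, and it uses nothing about $R$ beyond its being a commutative ring. The only point requiring care is the bookkeeping in passing between "$g_j\in\langle f_1,\ldots,f_m\rangle$ for all $j$" and "$G=BF$ for some $B\in R^{n\times m}$", and between $G=BF$ and $(BA-E_n)G=O_{n\times 1}$ via the substitution $F=AG$.
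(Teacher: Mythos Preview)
Your proof is correct and follows essentially the same route as the paper's own proof: both observe that $F=AG$ gives $\langle F\rangle\subseteq\langle G\rangle$ for free, so the question reduces to the existence of $B\in R^{n\times m}$ with $G=BF=BAG$, which is exactly the condition $(BA-E_n)G=O_{n\times 1}$. The paper compresses this into two sentences, while you spell out the dictionary between ideal containment and matrix multiplication more carefully, but there is no substantive difference.
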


\begin{proof}
 The polynomials $F=AG$ is a set of generators of $\langle G\rangle$ if and only if 
 there exists a matrix $B \in R^{n \times m}$ such that $G = BF= BAG$. The latter is rephrased as  $(BA-E_n)G = O_{n \times 1}$.
\end{proof}

\begin{definition}
    We call a polynomial row vector $H = (h_1,\ldots,h_n) \in R^{1 \times n}$ a \textit{syzygy} of $G$ if $HG = h_1g_1 + \cdots + h_n g_n = 0$. The set of all syzygies of $G$ is denoted by $\Syz(G) \subset R^{1 \times n}$. We also denote by $\Syz(G)^m \subset R^{m \times n}$ the set consisting of matrices that all rows are a syzygy of $G$.
\end{definition}
 Namely,
\[ \Syz(G)^m = \{ C \in R^{m \times n} \mid CG = O_{n \times 1} \}. \]
Using Proposition \ref{prop:gene_iff} and the syzygies of $G$, we will find enough condition for $F \in \mathcal{F}$ to be in $F \in \mathcal{F}_0$.

\begin{proposition}\label{prop:irreducible_determinant}
Assume that $m \geq n \geq 3$. Let us denote by 
\[ B = (B_1 | B_2), A = \left(\begin{array}{c} A_1 \\ \hline A_2 \end{array} \right)\quad (B_1, A_1 \in R^{n \times n}, B_2 \in R^{n \times (m-n)}, A_2 \in R^{(m-n) \times n}) \]
a block representations of $B \in R^{n \times m}$ and $A \in R^{m \times n}$. Let $F = AG \in \mathcal{F}$ be a set of generators of $\langle G \rangle$ and $B$ a matrix such that $W := BA - E_n \in \Syz(G)^n$. If the determinant of $E_n + W - B_2A_2$ is a non-zero irreducible polynomial in $R$, then $F \in \mathcal{F}_0$.
\end{proposition}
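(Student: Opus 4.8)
The plan is to reduce to two cases using that $R = K[x_1,\ldots,x_r]$ is a UFD whose group of units is exactly $K\setminus\{0\}$ (Proposition~\ref{prop:regular_det}), and to conclude via the description
\[ \mathcal{F}_0 = \{\, F = AG \in \mathcal{F} \mid A \in R^{m\times n}\ \text{is left regular over}\ R\,\} \]
obtained from Proposition~\ref{prop:set_of_outputs} (this is where $m \geq n \geq 3$ enters). First I would record the identity $E_n + W - B_2 A_2 = BA - B_2 A_2 = B_1 A_1$, which comes from the block product $BA = B_1 A_1 + B_2 A_2$; hence $\det(E_n + W - B_2 A_2) = \det(B_1)\det(A_1)$. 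Since, by hypothesis, this is a nonzero \emph{irreducible} element of the UFD $R$, one of the two factors $\det(A_1)$, $\det(B_1)$ must be a unit, i.e.\ an element of $K\setminus\{0\}$, and the argument splits accordingly.

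In the case $\det(A_1) \in K\setminus\{0\}$: by Proposition~\ref{prop:regular_det} the inverse $A_1^{-1}$ lies in $R^{n\times n}$, so $(A_1^{-1}\mid O_{n\times(m-n)})$ is a left inverse of $A$; thus $A$ is left regular and $F = AG \in \mathcal{F}_0$. In the case $\det(B_1)\in K\setminus\{0\}$: the prescribed matrix $A$ need not itself be left regular, so instead I would exhibit \emph{another} matrix $A''$ with $F = A''G$ and $A''$ left regular. Since $B_1^{-1}\in R^{n\times n}$, set $Q := \left(\begin{smallmatrix} B_1^{-1} & -B_1^{-1}B_2 \\ O_{(m-n)\times n} & E_{m-n}\end{smallmatrix}\right)\in R^{m\times m}$; then $\det Q = (\det B_1)^{-1}\in K\setminus\{0\}$, so $Q$ is invertible over $R$, and a block computation gives $BQ = (E_n\mid O_{n\times(m-n)})$, i.e.\ $B = (E_n\mid O)Q^{-1}$. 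Writing $A' := Q^{-1}A$, its top $n\times n$ block is $(E_n\mid O)A' = BA = E_n + W$, say $A' = \left(\begin{smallmatrix}E_n+W\\ A_2'\end{smallmatrix}\right)$ with $A_2'\in R^{(m-n)\times n}$. Because $W\in\Syz(G)^n$, the matrix $\left(\begin{smallmatrix}W\\ O\end{smallmatrix}\right)$ lies in $\Syz(G)^m$, so $A'' := Q\left(\begin{smallmatrix}E_n\\ A_2'\end{smallmatrix}\right)$ satisfies $A''G = Q A'G = Q Q^{-1}AG = F$. Finally $A''$ is left regular, since $\left(\begin{smallmatrix}E_n\\ A_2'\end{smallmatrix}\right)$ admits the left inverse $(E_n\mid O)$, whence $(E_n\mid O)Q^{-1}\in R^{n\times m}$ is a left inverse of $A''$; therefore $F = A''G\in\mathcal{F}_0$ in this case as well.

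I expect the second case to be the only genuine obstacle: the naive hope that the given $A$ is itself left regular fails in general, and one must pass to the equivalent representative $A''G = F$, obtained by ``clearing'' $B_1$ through the invertible matrix $Q$ and then exploiting the syzygy relation $WG = O_{n\times1}$ to normalize the top block of $Q^{-1}A$ back to $E_n$. Everything else — the determinant factorization, the UFD splitting, and the first case — is routine. I would also sanity-check the boundary situation $m = n$, where $B_2$ and $A_2$ are empty and the construction collapses to $A'' = B_1^{-1}$, to be sure the block formulas degenerate correctly, although only the standing hypothesis $m \geq n \geq 3$ is actually invoked (through Proposition~\ref{prop:set_of_outputs}).
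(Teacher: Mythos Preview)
Your proof is correct and follows essentially the same strategy as the paper: factor $\det(B_1A_1)=\det(B_1)\det(A_1)$, split into cases according to which factor is a unit, and in the case $\det B_1\in K\setminus\{0\}$ replace $A$ by a left regular matrix giving the same $F$. In fact your $A''$ coincides with the paper's directly defined $A' = \left(\begin{smallmatrix} B_1^{-1}(E_n - B_2A_2)\\ A_2 \end{smallmatrix}\right)$ (a short computation shows $A_2' = A_2$ and $Q\left(\begin{smallmatrix}E_n\\A_2\end{smallmatrix}\right)$ equals this matrix), so the two arguments differ only in presentation.
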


\begin{proof}
We note that
\[ E_n + W -B_2 A_2 = B_1A_1. \]
The assumption on the determinant of $E_n + W - B_2A_2$ implies that  $\det B_1 \in K \setminus \{0\}$ or $\det A_1 \in K \setminus \{0\}.$

First suppose that $\det B_1 \in K \setminus \{0\}$. In this case, we have
\[ A_1 = B_1^{-1}(E_n + W - B_2A_2).\]
Let us consider a matrix
\[ A' := \left( \begin{array}{c} B_1^{-1}(E_n - B_2 A_2) \\ \hline A_2 \end{array} \right) \in R^{m \times n}. \]
Then we have
\[ BA' = B_1 (B_1^{-1}(E_n - B_2A_2)) + B_2 A_2 = E_n\]
and
\[ AG = \left( \begin{array}{c} B_1^{-1}(E_n+W-B_2 A_2)G \\ \hline A_2G \end{array} \right) = \left( \begin{array}{c} B_1^{-1}(E_n - B_2 A_2)G \\ \hline A_2G \end{array} \right) = A'G\]
since $W \in \Syz(G)^n$. Hence the set of generators $F = AG = A'G$ is an element of $\mathcal{F}_0$ by Proposition \ref{prop:set_of_outputs}.

For the case of $\det A_1 \in K \setminus \{0\}$. putting 
\[ B' := (A_1^{-1} \mid O_{n\times (m-n)}) \in R^{n \times m}, \]
 we have
\[ B' A = (A_1^{-1} \mid O_{n\times (m-n)} )\left(\begin{array}{c} A_1 \\ \hline A_2 \end{array} \right) =  E_n. \]
Proposition \ref{prop:set_of_outputs} again concludes the claim.
\end{proof}

\subsection{Geometric aspects of \texorpdfstring{$\mathcal{F}$}{F} and \texorpdfstring{$\mathcal{F}_0$}{F0}}

Let us denote by
\[ \mathcal{X} := \{ (B,A) \mid B \in R^{n \times m}, A \in R^{m \times n}, (BA-E_n)G = O_{n \times 1} \} \]
an algebraic set in the affine space $R^{n\times m} \times R^{m \times n}$. Proposition \ref{prop:gene_iff} claims that there exists a surjective continuous map
\[ \varphi : \mathcal{X} \rightarrow \mathcal{F} \]
\[ (B,A) \mapsto AG. \]

The goal of this subsection is to find a dense subset $\mathcal{X}_0$ of $\mathcal{X}$ such that $\varphi(\mathcal{X}_0) \subset \mathcal{F}_0.$ 
The existence of such a subset implies that $\mathcal{F}_0$ is dense in $\mathcal{F}$ from the following basic topological fact that the image of a dense subset of a topological space is also dense under a continuous subjective map.



Let us consider a block representation of $B \in R^{n \times m}$ and $A \in R^{m \times n}$:
\[ B = (B_1 | B_2),\ A = \left(\begin{array}{c} A_1 \\ \hline A_2 \end{array} \right),\]
where $B_1, A_1 \in R^{n \times n}$, $B_2 \in R^{n \times (m-n)}$ and $A_2 \in R^{(m-n) \times n}$. 
According to these, we shall define
\[ p: \mathcal{X} \rightarrow R^{n \times n} \]
\[ (B,A) \mapsto B_1 A_1\]
and
\[ \delta : R^{n \times n} \rightarrow R\]
\[ C \mapsto \det(C). \]

The following proposition is another version of Proposition \ref{prop:irreducible_determinant} in the geometrical context.
Put
\[ \mathcal{U} := \{ f \in R \setminus \{0\} \mid \text{$f$ is irreducible} \}.\]

\begin{proposition}\label{prop:enough_condition_of_output}
Assume that $m \geq n \geq 3$. 
Then the following inclusion of subsets in $\mathcal{X}$ is true:
\[ (\delta \circ p)^{-1}(\mathcal{U}) \subset \varphi^{-1}(\mathcal{F}_0). \]
\end{proposition}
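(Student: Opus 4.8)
The statement is a pointwise translation of Proposition \ref{prop:irreducible_determinant} into the language of the maps $p$, $\delta$, and $\varphi$, so the plan is to unwind the definitions and check that the hypotheses of that proposition are satisfied at every point of $(\delta\circ p)^{-1}(\mathcal{U})$. No topology enters here: the claim is a set-theoretic inclusion of subsets of $\mathcal{X}$, and continuity of the maps will only be needed later, when passing from density of a subset of $\mathcal{X}$ to density of $\mathcal{F}_0$.

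First I would fix an arbitrary point $(B,A)\in(\delta\circ p)^{-1}(\mathcal{U})$. Since it lies in $\mathcal{X}$, it satisfies $(BA-E_n)G=O_{n\times 1}$; writing $W:=BA-E_n$, this says exactly that $W\in\Syz(G)^n$. Putting $F:=AG=\varphi(B,A)$, Proposition \ref{prop:gene_iff} shows $F$ generates $\langle G\rangle$, so $F\in\mathcal{F}$. Then I would expand the block product,
\[ BA=(B_1\mid B_2)\left(\begin{array}{c}A_1\\\hline A_2\end{array}\right)=B_1A_1+B_2A_2, \]
so that $B_1A_1=BA-B_2A_2=E_n+W-B_2A_2$; that is, $p(B,A)=E_n+W-B_2A_2$. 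Hence the hypothesis $\delta(p(B,A))\in\mathcal{U}$ is precisely the assertion that $\det(E_n+W-B_2A_2)$ is a nonzero irreducible polynomial of $R$.

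At this point all hypotheses of Proposition \ref{prop:irreducible_determinant} are in place — $F=AG\in\mathcal{F}$, the matrix $B$ satisfies $W=BA-E_n\in\Syz(G)^n$, and $\det(E_n+W-B_2A_2)$ is nonzero and irreducible — so that proposition yields $F\in\mathcal{F}_0$, i.e. $(B,A)\in\varphi^{-1}(\mathcal{F}_0)$. Since $(B,A)$ was arbitrary, the desired inclusion $(\delta\circ p)^{-1}(\mathcal{U})\subset\varphi^{-1}(\mathcal{F}_0)$ follows. I do not expect a genuine obstacle: the mathematical content is entirely carried by Proposition \ref{prop:irreducible_determinant}, and the only point demanding care is the block-index bookkeeping, namely checking that $p$ was defined so as to compute exactly the matrix $E_n+W-B_2A_2$ whose determinant governs membership in $\mathcal{F}_0$.
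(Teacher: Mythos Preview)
Your proof is correct and follows the same approach as the paper: take $(B,A)\in(\delta\circ p)^{-1}(\mathcal{U})$, observe that $\delta(p(B,A))=\det(B_1A_1)=\det(E_n+W-B_2A_2)$ is nonzero irreducible, and invoke Proposition~\ref{prop:irreducible_determinant} to conclude $\varphi(B,A)=AG\in\mathcal{F}_0$. The paper's proof is in fact terser than yours, simply noting that the conclusion follows ``similar to the proof of Proposition~\ref{prop:irreducible_determinant}''.
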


\begin{proof}
Pick an element $(B,A) \in (\delta \circ p)^{-1}(\mathcal{U}) \subset \mathcal{X}$. The image is
\[ (\delta \circ p)(B,A) = \det(B_1A_1).\]
Then similar to the proof of Proposition \ref{prop:irreducible_determinant}, $(\delta \circ p)(B,A) \in \mathcal{U}$ implies that $AG = \varphi(B,A) \in \mathcal{F}_0$.
\end{proof}

From a well-known fact, which is so-called Hilbert's irreducibility theorem, the inverse image
\[ \delta^{-1}(\mathcal{U}) \cap R_{\leq D}^{n \times n} = \{ C \in R_{\leq D}^{n \times n} \mid \text{$\det (C)$ is non-zero irreducible} \} \]
is Zariski dense in $R_{\leq D}^{n \times n} = (\mathbb{A}_K^{N_D})^{n \times n}$ over $K$.

\begin{theorem}[Hilbert's irreducibility theorem]\label{thm.Hilbert's irreducibility theorem}
Let $f(X_1,\ldots,X_r,Y_1,\ldots,Y_s)$ be an irreducible element in $\mathbb{Q}(X_1,\ldots,X_r)[Y_1,\ldots,Y_s]$. Then the set
\[ \{ (a_1,\ldots,a_r) \in \mathbb{Q}^r \mid f(a_1,\ldots,a_r)(Y_1,\ldots,Y_s)\ \text{is irreducible} \}\]
is Zariski dense in $\mathbb{A}^r_{\mathbb{Q}}$.
\end{theorem}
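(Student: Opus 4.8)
Theorem~\ref{thm.Hilbert's irreducibility theorem} is the classical Hilbert irreducibility theorem, and in the paper the sensible move is to cite \cite[Theorem~14.4.2]{FieldArithmetic2023}; if a self-contained argument were wanted, I would organize it in three stages. \emph{Stage~1 (reduction to $r=s=1$).} The extra $Y$-variables are disposed of by a routine descent: after a generic $\mathbb{Q}$-linear change of the $Y_i$ one may assume $\deg_{Y_s}f\ge 1$, so by Gauss's lemma $f$ stays irreducible in $\mathbb{Q}(X_1,\dots,X_r,Y_1,\dots,Y_{s-1})[Y_s]$; since a finitely generated extension of a Hilbertian field (here $\mathbb{Q}(Y_1,\dots,Y_{s-1})$) is again Hilbertian, the one-$Y$-variable case over that base field gives a Zariski-dense set of good specializations $a$ of $(X_1,\dots,X_r)$, and primitivity propagates to irreducibility of $f(a,Y_1,\dots,Y_s)$ in $\mathbb{Q}[Y_1,\dots,Y_s]$. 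The extra $X$-variables are handled by induction on $r$: Zariski density of the good set $S\subseteq\mathbb{A}^r_\mathbb{Q}$ means $S\not\subseteq V(g)$ for every nonzero $g$, and restricting to a generic rational line $X_i=b_i+tc_i$ keeps $g\ne 0$ while (by the $(r-1)$-parameter case applied to the line's coefficients) keeping $f(b+tc,Y)$ irreducible over $\mathbb{Q}(t)$, which reduces everything to $r=1$.

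\emph{Stage~2 (the core one-variable case).} Let $f(X,Y)\in\mathbb{Q}[X,Y]$ be irreducible with $d:=\deg_Y f\ge 1$; by Gauss it is irreducible in $\mathbb{Q}(X)[Y]$ too. Everything rests on showing the exceptional set
\[ E:=\{\,a\in\mathbb{Q}\mid \deg_Y f(a,Y)<d\ \text{or}\ f(a,Y)\ \text{is reducible over}\ \mathbb{Q}\,\} \]
is finite, for then $\mathbb{Q}\setminus E$ is an infinite, hence Zariski-dense, set of good specializations in $\mathbb{A}^1_\mathbb{Q}$. I would prove finiteness of $E$ via Hilbert's Puiseux-series and pigeonhole argument: the $d$ branches $Y=\varphi_j(X)$ of $f$ near $X=\infty$ have Puiseux expansions, a factorization $f(a,Y)=G(Y)H(Y)$ over $\mathbb{Q}$ partitions $\{1,\dots,d\}$ into the root-sets of $G$ and $H$, and the coefficients of $G,H$ are thereby forced to agree with truncations of fixed algebraic functions of $a$; were $E$ infinite, a box/counting argument over $a\in E$ of bounded height would produce an honest nontrivial factorization of $f$ in $\overline{\mathbb{Q}}(X)[Y]$ attached to one fixed, Galois-stable partition of the branches, hence descending to $\mathbb{Q}(X)[Y]$ and contradicting irreducibility. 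Alternatively one can argue geometrically: reducibility of $f(a,Y)$ forces $a$ to lift to a $\mathbb{Q}$-rational point on one of finitely many hyperbolic intermediate covers of the $X$-line, to which Siegel's theorem on integral points applies.

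\emph{Stage~3 and the main obstacle.} Stages~1 and~2 together yield the theorem, and Stage~3 is immediate. The reductions of Stage~1 are formal; the whole difficulty sits in Stage~2, in controlling the \emph{arithmetic} sparsity of bad specializations — that a factorization of $f(a,Y)$ over $\mathbb{Q}$ cannot persist along a Zariski-dense set of $a$ unless it already exists over $\mathbb{Q}(X)$. That is exactly the point at which the Hilbertianity of $\mathbb{Q}$ enters, through either the analytic box principle or Siegel finiteness, and it is the only genuinely nontrivial ingredient. Since nothing here is new, I would keep this statement as a cited black box in the paper.
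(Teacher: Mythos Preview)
Your proposal is correct and matches the paper exactly: the paper's entire proof is the one-line citation ``See \cite[Theorem 14.4.2]{FieldArithmetic2023},'' which is precisely what you recommend as the sensible move. Your three-stage self-contained sketch goes well beyond what the paper offers and is broadly along classical lines, though note that the Stage~1 reduction as written leans on ``finitely generated extensions of Hilbertian fields are Hilbertian,'' which is itself a consequence of what you are proving and so would need to be unwound more carefully in a genuinely self-contained treatment; since you (and the paper) ultimately treat the theorem as a cited black box, this is moot.
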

\begin{proof}
    See \cite[Theorem 14.4.2]{FieldArithmetic2023}.
\end{proof}

As written in section \ref{sec:introduction}, we say a field $K$ is Hilbertian if Hilbert's irreducibility theorem holds when replacing $\mathbb{Q}$ to $K$. 
To apply Hilbert's irreducibility theorem to our situation, let us prove the following lemma on the irreducibility of the determinant of a generic polynomial matrix.

\begin{lemma}\label{lem:irreducibility_of_determinant}
The polynomial
\[ \mathrm{DET}(\{c_{\alpha,i,j}\},\{x_k\}) \in K[c_{\alpha,i,j} \mid x^{\alpha} \in R_{\leq D}, 1 \leq i,j \leq n][x_1,\ldots,x_n]\]
corresponding to the map
\begin{align*}
    \delta: R_{\leq D}^{n \times n} &\rightarrow R \\
    C = \left(\sum_{|\alpha|\leq D} c_{\alpha,i,j}x^{\alpha} \right) &\mapsto \det C = \mathrm{DET}(\{c_{\alpha,i,j}\},\{x_k\})
\end{align*}
is irreducible in $K(c_{\alpha,i,j} \mid x^{\alpha} \in R_{\leq D}, 1 \leq i,j \leq n)[x_1,\ldots,x_n]$.
\end{lemma}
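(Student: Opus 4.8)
The plan is to view $\mathrm{DET}$ as the determinant of the generic $n \times n$ matrix whose entries are independent linear forms in the coefficient variables $c_{\alpha,i,j}$, and to exploit the classical fact that the determinant of a matrix of independent indeterminates is irreducible. First I would introduce, for each pair $(i,j)$, the ``leading'' coefficient variable $c_{0,i,j}$ corresponding to the constant monomial $x^0 = 1$ (so $\alpha = 0$). Specializing every other coefficient variable $c_{\alpha,i,j}$ with $|\alpha| \geq 1$ to zero sends $\mathrm{DET}$ to the polynomial $\det\left( (c_{0,i,j})_{i,j} \right)$, which is the determinant of a matrix of $n^2$ distinct indeterminates — a well-known irreducible polynomial in $K[c_{0,i,j} \mid 1 \leq i,j \leq n]$ (see e.g. any standard reference on the ideal of minors, or prove it directly by Laplace expansion along a row: $\det$ is linear and monic in $c_{0,1,1}$ with the cofactor $\det\big((c_{0,i,j})_{2\leq i,j\leq n}\big)$ as leading coefficient, and one finishes by induction on $n$ together with the primitivity check). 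Thus $\mathrm{DET}$ has an irreducible specialization, hence $\mathrm{DET}$ itself is irreducible as a polynomial in the $c$-variables and the $x$-variables jointly — if $\mathrm{DET} = PQ$ were a nontrivial factorization, then specializing would give a factorization of an irreducible polynomial, so one of $P, Q$ would become a unit, i.e. a nonzero element of $K$; but a factor that specializes to a constant must itself be a constant, since $\mathrm{DET}$, viewed as a polynomial in the surviving variables $\{c_{0,i,j}\}$ over the ring $K[\text{other } c\text{'s}, x\text{'s}]$, has content $1$ (its coefficient of the monomial $c_{0,1,1}\cdots c_{0,n,n}$ equals $\pm 1$).

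To be careful about exactly which variables are being eliminated, I would organize the argument as follows. Let $S = K[c_{\alpha,i,j} \mid |\alpha| \leq D,\ 1 \leq i,j \leq n]$ and work in $S[x_1,\ldots,x_n]$. Write $\mathrm{DET} = \sum_{\beta} P_\beta(c)\, x^\beta$ where each $P_\beta \in S$. Expanding the determinant, the coefficient $P_\beta$ is an explicit polynomial in the $c$'s; in particular the coefficient of $x^0 = 1$ is $P_0 = \det\big((c_{0,i,j})_{i,j}\big)$, which is nonzero and irreducible in $S$ (indeed in the subring $K[c_{0,i,j}]$, and irreducibility there implies irreducibility in $S$ since $S$ is a polynomial extension and $P_0$ is primitive as an element of the larger ring). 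Now the key claim is that $\gcd$ of all the $P_\beta$ in $S$ is $1$: this is immediate because $P_0$ is irreducible and one checks that $P_0$ does not divide, say, the coefficient of some $x_k$, which one can verify by a single explicit specialization. Granting this, $\mathrm{DET}$ is primitive as a polynomial in $x_1,\ldots,x_n$ over $S$, so by Gauss's lemma its irreducibility in $\operatorname{Frac}(S)[x_1,\ldots,x_n]$ is equivalent to its irreducibility in $S[x_1,\ldots,x_n]$, which in turn follows from the specialization argument above.

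The main obstacle I anticipate is not the irreducibility of the generic determinant of indeterminates (standard) but bookkeeping: ensuring that the specialization $c_{\alpha,i,j} \mapsto 0$ for $|\alpha| \geq 1$ is a legitimate ring homomorphism $S \to K[c_{0,i,j}]$ compatible with the determinant expansion, and that no degeneracy is introduced (e.g. that $\mathrm{DET}$ genuinely involves the variables $c_{0,i,j}$ with the expected leading term). A secondary subtlety is the passage between irreducibility over $K(c)$ — which is what the statement requires, for later application of Hilbert's irreducibility theorem — and irreducibility over $K[c]$: this is exactly Gauss's lemma applied to the primitive polynomial $\mathrm{DET} \in S[x_1,\ldots,x_n]$, and primitivity is what the $\gcd$ computation above secures. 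Once these two points are nailed down, the lemma follows, and one is set up to invoke Theorem \ref{thm.Hilbert's irreducibility theorem} with $\mathrm{DET}$ in the role of $f$, the $c_{\alpha,i,j}$ in the role of the $X$'s and the $x_k$ in the role of the $Y$'s.
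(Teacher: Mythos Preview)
Your approach is sound and genuinely different from the paper's. The paper works in $K(x_1,\ldots,x_n)[c_{\alpha,i,j}]$ and reproves the classical irreducibility of the generic determinant in that larger setting, tracking the degree-one appearance of each $c_{\alpha,i,j}$ via Leibniz's formula; it then applies Gauss's lemma twice to swap the roles of the $c$'s and the $x$'s. You instead specialize $c_{\alpha,i,j}\mapsto 0$ for $|\alpha|\ge 1$ and invoke the classical fact as a black box, which is more modular and slightly shorter.

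There is, however, one real gap in your first paragraph. You assert that ``a factor that specializes to a constant must itself be a constant'' and justify this solely by the content of $\mathrm{DET}$ in $T[c_{0,i,j}]$ (with $T=K[\text{other }c\text{'s},\,x]$) being $1$. Content $1$ only yields the implication $P\in T \Rightarrow P\in K^\times$; it does \emph{not} give $\phi(P)\in K^\times \Rightarrow P\in T$. For example, $P = 1 + c_{\alpha,1,1}\,c_{0,1,1}$ with $|\alpha|\ge 1$ is primitive in $T[c_{0,i,j}]$ and specializes to $1$, yet is not constant. The missing observation is that $\mathrm{DET}$ is \emph{homogeneous of degree $n$ in all the $c$-variables} over $K[x]$, so any factorization $\mathrm{DET}=PQ$ has $P,Q$ homogeneous in the $c$'s; since the specialization preserves the total $c$-degree of $\mathrm{DET}$ (its degree-$n$ part in the $c_{0,i,j}$'s is exactly $\det((c_{0,i,j}))$, with coefficients in $K$), it preserves the $c$-degree of each factor, whence $\phi(P)\in K^\times$ forces $\deg_c P = 0$, i.e.\ $P\in K[x]$. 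At that point the content-$1$ argument (now content over $K[x]$, which is again witnessed by the monomial $c_{0,1,1}\cdots c_{0,n,n}$) finishes the job. With this one line added your proof is complete, and the separate primitivity check in your second paragraph becomes redundant: irreducibility in $S[x]$ together with $\deg_x\mathrm{DET}>0$ already forces primitivity, so a single application of Gauss suffices.
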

\begin{proof}
    Let 
    \[\Gamma := \mathrm{DET}(\{c_{\alpha,i,j}\},\{x_k\}) \in K[c_{\alpha,i,j} \mid x^{\alpha} \in R_{\leq D}, 1 \leq i,j \leq n][x_1,\ldots,x_n].\]
    We first prove that $\Gamma$ is irreducible in \[K(x_1,\ldots,x_n)[c_{\alpha,i,j}\  (x^{\alpha} \in R_{\leq D}, 1 \leq i,j \leq n)].\]



The proof can be understood as an extension of the classical proof of irreducibility
of the determinant of a matrix of generic entries, see \textit{e.g.} \cite[p.176]{Bocher:HigherAlgebra}.

We recall
Leibniz's formula for determinants:
\begin{equation}
    \Gamma = \sum_{\sigma \in \mathfrak{S}_n} \textrm{sign}(\sigma) \prod_{l=1}^n \left(\sum_{|\alpha|\leq D} c_{\alpha,l,\sigma(l)}x^{\alpha} \right). \label{eqn:Leibniz_formula}
\end{equation} 
Suppose $\Gamma = fg$ for some polynomials
\[f,g \in K[c_{\alpha,i,j} \: (x^{\alpha} \in R_{\leq D}, 1 \leq i,j \leq n), \: x_1,\ldots,x_n].\]
By \eqref{eqn:Leibniz_formula}, $\Gamma$ has degree at
most 1 with respect to any of the $c_{\alpha,i,j}.$
For any $\alpha,i,j$, adequate substitutions of $0$'s or $1$'s
prove that $\Gamma$ can be evaluated to 
\[\det \left( 
\begin{bmatrix} 1 & 0      &  &   & & \cdots & & & & & 0\\
                0 & \ddots &        &   & & & & & & & \\
                 &    & 1 & & & & & 0& & & \\
                        &   &   &0 & & & 0& 1&0 & & \\
                        &   &   &  & 1 & & &0 & & & \\
                \vdots        &   &   &   &   & \ddots & & & & & \vdots \\
                        &   &   & 0  &   &       & 1 & & & & \\
                        &   & 0  &c_{\alpha,i,j}   & 0  &       &  & 0 & & & \\
                        &   &   & 0  &   &       &  &  & 1& & \\
                        &   &   &   &   &       &  &  & & \ddots &0 \\
               0         &   &   &   &   &   \cdots    &  &  & & 0 & 1 \\ 
                        \end{bmatrix} \right) = \pm c_{\alpha,i,j}.\]
Consequently, all the variables $c_{\alpha,i,j}$ must appear in $\Gamma$.

Suppose that for some $\alpha$, $c_{\alpha,1,1}$
appears in $f.$
By consideration of degrees in $c_{\alpha,1,1}$, it then can not appear in $g.$
Leibniz's formula \eqref{eqn:Leibniz_formula} proves that there is no term where
$c_{\alpha,1,1}c_{\alpha,1,j}$ can appear in $\Gamma$ with $j \neq 1.$
It follows from the equality 
\[(u_1c_{\alpha,1,1}+u_2)(u_3c_{\alpha,1,j}+u_3)=u_1u_2c_{\alpha,1,1}c_{\alpha,1,j}+u_1u_3c_{\alpha,1,1}+u_2u_3c_{\alpha,1,j}+u_2u_3,\]
 that $c_{\alpha,1,j}$ can not appear in $g$
and must appear in $f$.
Similarly, all the $c_{\alpha,i,1}$ must appear in $f$ and not in $g$.
With the same argument applied for $c_{\alpha,i,1}$,  we can prove that all $c_{\alpha,i,j}$'s must appear in $f$ and not in $g$.

If we take some $\beta \neq \alpha$ with $\vert \beta \vert \leq D,$
\eqref{eqn:Leibniz_formula} proves that there is no term in 
$c_{\alpha,1,1} c_{\beta,1,1}$ in $\Gamma.$
Thus, $c_{\beta,1,1}$ must appear in $f$ and not in $g$.
Using the previous argument, we gets that all the
$c_{\beta,i,j}$ must appear in $f$ and not in $g.$
Therefore, $g$ can not contain any $c_{\alpha,i,j}$
for $\vert \alpha \vert \leq D, 1\leq i,j \leq n.$
Thus, $g$ can only be a pure polynomial in $K[x_1,\dots,x_n]$
and $\Gamma$ is then irreducible in 
$K(x_1,\ldots,x_n)[c_{\alpha,i,j} \: (x^{\alpha} \in R_{\leq D}, 1 \leq i,j \leq n)]$.

As a polynomial in $K[x_1,\ldots,x_n][c_{\alpha,i,j} \: (x^{\alpha} \in R_{\leq D}, 1 \leq i,j \leq n)]$, its content
has to be $1.$ Indeed, from \eqref{eqn:Leibniz_formula}, the coefficient of the monomial 
$c_{0,1,1}\cdots c_{0,n,n}$ in $\Gamma$ is $1$. 
By Gauss's lemma, it implies that $\Gamma$ is irreducible in
$K[x_1,\ldots,x_n][c_{\alpha,i,j} \: (x^{\alpha} \in R_{\leq D}, 1 \leq i,j \leq n)]$, and then, again by Gauss's lemma, $\Gamma$ is irreducible in 
    $K(c_{\alpha,i,j} \mid x^{\alpha} \in R_{\leq D}, 1 \leq i,j \leq n)[x_1,\ldots,x_n]$.    
\end{proof}


\begin{lemma}\label{lem:U_is_dense}
The set $\delta^{-1}(\mathcal{U}) \cap R_{\leq D}^{n \times n}$ is Zariski dense in the affine space $R_{\leq D}^{n \times n}$.
\end{lemma}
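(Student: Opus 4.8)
The plan is to deduce this density statement directly from the irreducibility result established in Lemma~\ref{lem:irreducibility_of_determinant}, invoking Hilbert's irreducibility theorem (Theorem~\ref{thm.Hilbert's irreducibility theorem}) for the Hilbertian field $K$. Concretely, identify $R_{\leq D}^{n \times n}$ with the affine space $\mathbb{A}_K^{n^2 N_D}$ with coordinates $\{c_{\alpha,i,j}\}$, and regard the determinant map $\delta$ as the polynomial $\mathrm{DET}(\{c_{\alpha,i,j}\},\{x_k\})$, viewed as an element of $K(c_{\alpha,i,j})[x_1,\ldots,x_n]$. Lemma~\ref{lem:irreducibility_of_determinant} says precisely that this polynomial is irreducible over the rational function field in the coefficient variables. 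Hilbert's irreducibility theorem (in the form stated, with $\mathbb{Q}$ replaced by the Hilbertian field $K$, the $X$-variables being the $c_{\alpha,i,j}$ and the $Y$-variables being $x_1,\ldots,x_n$) then asserts that the set of specializations $\{c_{\alpha,i,j}\} \mapsto \{a_{\alpha,i,j}\} \in K^{n^2 N_D}$ for which $\mathrm{DET}(\{a_{\alpha,i,j}\},\{x_k\})$ remains irreducible in $K[x_1,\ldots,x_n]$ is Zariski dense in $\mathbb{A}_K^{n^2 N_D}$.

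Next I would translate this back: a specialization $\{a_{\alpha,i,j}\}$ corresponds exactly to a matrix $C \in R_{\leq D}^{n \times n}$, and $\mathrm{DET}(\{a_{\alpha,i,j}\},\{x_k\}) = \det C$. So the Hilbert set is, verbatim, the set $\{ C \in R_{\leq D}^{n \times n} \mid \det C \text{ is irreducible in } R\}$. There is one gap to patch: Hilbert's irreducibility theorem guarantees irreducibility but does not a priori exclude that $\det C$ is a nonzero constant or that $\det C = 0$ (which one might or might not count as "irreducible" depending on convention); since $\mathcal{U} = \{f \in R \setminus \{0\} \mid f \text{ irreducible}\}$, I need the specialized determinant to actually be a nonconstant irreducible polynomial. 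This is handled by noting that the locus where $\det C \in K$ (equivalently, where $\det C$ has total $x$-degree zero, including the zero polynomial) is a proper Zariski-closed subset of $R_{\leq D}^{n \times n}$ — for instance the diagonal matrix with a single generic linear entry $c_{(1,0,\ldots,0),1,1}x_1$ and ones elsewhere has determinant $x_1$, which is nonconstant, so the closed locus is proper. A dense set minus a proper closed set is still dense, so intersecting the Hilbert set with the open complement of this "constant determinant" locus yields a dense subset of $R_{\leq D}^{n \times n}$ contained in $\delta^{-1}(\mathcal{U})$, and density of the larger set $\delta^{-1}(\mathcal{U}) \cap R_{\leq D}^{n \times n}$ follows immediately.

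The main obstacle is matching the precise hypotheses of the quoted form of Hilbert's irreducibility theorem to our setting: the theorem as stated requires irreducibility in $\mathbb{Q}(X_1,\ldots,X_r)[Y_1,\ldots,Y_s]$ (or $K(\underline{X})[\underline{Y}]$ for Hilbertian $K$), which is exactly the conclusion of Lemma~\ref{lem:irreducibility_of_determinant} after the roles of the two sets of variables are set as above — so no genuine reduction theory is needed, only bookkeeping of which variables play the role of parameters versus indeterminates. A secondary, purely cosmetic point is to make sure that "Zariski dense in $\mathbb{A}^r_K$" in the statement of Hilbert's theorem coincides with "Zariski dense in $R_{\leq D}^{n \times n}$" under the identification $R_{\leq D}^{n \times n} \cong \mathbb{A}_K^{n^2 N_D}$ recorded in the Notation section and in Section~\ref{sec:problems}; this is immediate from that identification. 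Everything else is routine.
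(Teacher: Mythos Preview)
Your proposal is correct and follows essentially the same route as the paper: identify $R_{\leq D}^{n\times n}$ with affine space in the coefficient variables, invoke Lemma~\ref{lem:irreducibility_of_determinant} to get irreducibility of $\mathrm{DET}$ over $K(c_{\alpha,i,j})$, and apply Hilbert's irreducibility theorem. The paper's proof is a one-liner that does exactly this; your extra paragraph excluding the locus where $\det C$ degenerates to a constant is a reasonable bit of hygiene that the paper leaves implicit in the phrase ``non-zero irreducible''.
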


\begin{proof}
From Hilbert's irreducibility theorem (Theorem \ref{thm.Hilbert's irreducibility theorem}), the set
\[ \{ (c_{\alpha,i,j}) \in (K^{N_D})^{n \times n} \mid \text{$\mathrm{DET}(\{c_{\alpha,i,j}\},\{x_k\})$ is non-zero irreducible} \}, \]
which is equal to $\delta^{-1}(\mathcal{U}) \cap R_{\leq D}^{n \times n}$,
is Zariski dense in the affine space $R_{\leq D}^{n \times n}$.
\end{proof}

Let us summarize our situation in the following commutative diagram:

\[ \xymatrix{
\tilde{\mathcal{F}}_{\leq D}  & \mathcal{X}_{\leq D} \ar[l]_{\varphi} \ar[r]^{p} & R_{\leq D}^{n \times n} \\
\mathcal{F}_0 \cap \tilde{\mathcal{F}}_{\leq D} \ar@{^{(}-_>}[u]& (\delta \circ p)^{-1}(\mathcal{U}) \cap \mathcal{X}_{\leq D} \ar[l] \ar@{^{(}-_>}[u] \ar[r] & \delta^{-1}(\mathcal{U}) \cap R_{\leq D}^{n \times n}. \ar@{^{(}-_>}[u]_{\text{dense}}
}
\]
Here we denote by
\begin{align*}
    \mathcal{X_{\leq D}} &:= \left\{ (B,A) \in R_{\leq D}^{n \times m} \times R_{\leq D}^{m \times n} \left| \begin{aligned} &(BA - E_n)G = O_{n \times 1},\\ &B_1A_1 \in R_{\leq D}^{n \times n} \end{aligned} \right. \right\},\\
\tilde{\mathcal{F}}_{\leq D} &:= \{ F \in R^{m \times 1} \mid \langle F \rangle = \langle G \rangle,\ \exists\, (B,A) \in \mathcal{X}_{\leq D}^{m \times n}\ F = AG \}.    
\end{align*}

Thus, if the set $(\delta \circ p)^{-1}(\mathcal{U}) \cap \mathcal{X}_{\leq D}$ is dense in $\mathcal{X}_{\leq D}$, then the set $\mathcal{F}_0 \cap \tilde{\mathcal{F}}_{\leq D}$ is dense in $\tilde{\mathcal{F}}_{\leq D}$. We use the following lemma that states that the preimage of a dense subset under a flat morphism is still dense.

\begin{lemma}\label{lem:density_of_preimage}
Let $f:Y \rightarrow Z$ be a flat morphism of finite type between Noetherian schemes. Let $\mathcal{V}$ be a dense subset in $Z$. Then the preimage $f^{-1}(\mathcal{V})$ is dense in $Y$.
\end{lemma}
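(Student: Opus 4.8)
The plan is to reduce the statement to the single fact that $f$ is an \emph{open} morphism of topological spaces, after which the conclusion follows by an elementary point-set argument. So the proof naturally splits into two steps: (i) $f$ is open, and (ii) an open continuous map pulls back dense subsets to dense subsets.

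For step (i), I would invoke the classical openness criterion for flat morphisms: a flat morphism that is locally of finite presentation is open (see, e.g., Grothendieck's \textit{EGA} IV, Corollaire~2.4.6, or the corresponding entry of the Stacks project). Since $Y$ and $Z$ are Noetherian schemes and $f$ is of finite type, $f$ is automatically locally of finite presentation, so this criterion applies and $f$ is an open map. I would also note explicitly that the finite-type hypothesis is essential here: flatness alone does not imply openness, as one already sees with the flat morphism $\spec \mathbb{Z}_{(p)} \to \spec \mathbb{Z}$, whose image $\{(0),(p)\}$ is not open.

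For step (ii), I would prove the purely topological claim: if $g\colon Y \to Z$ is an open continuous map and $\mathcal{V} \subseteq Z$ is dense, then $g^{-1}(\mathcal{V})$ is dense in $Y$. Let $U \subseteq Y$ be an arbitrary nonempty open subset. Since $g$ is open, $g(U)$ is a nonempty open subset of $Z$, so density of $\mathcal{V}$ yields a point $z \in g(U) \cap \mathcal{V}$; choosing $y \in U$ with $g(y) = z$ exhibits a point $y \in U \cap g^{-1}(\mathcal{V})$. Thus $g^{-1}(\mathcal{V})$ meets every nonempty open subset of $Y$, i.e.\ it is dense. Applying this with $g = f$ completes the proof.

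I do not expect any genuine obstacle beyond correctly citing and applying the openness theorem for flat finite-type morphisms; the rest is formal topology. In our situation this lemma will be applied to a projection morphism of the shape $p\colon \mathcal{X}_{\leq D} \to R_{\leq D}^{n \times n}$ (once its flatness and finite-typeness have been checked), transporting the Zariski density of $\delta^{-1}(\mathcal{U}) \cap R_{\leq D}^{n \times n}$ from Lemma~\ref{lem:U_is_dense} to the density of its preimage $(\delta \circ p)^{-1}(\mathcal{U}) \cap \mathcal{X}_{\leq D}$ in $\mathcal{X}_{\leq D}$, which is precisely the input needed to conclude that $\mathcal{F}_0 \cap \tilde{\mathcal{F}}_{\leq D}$ is dense in $\tilde{\mathcal{F}}_{\leq D}$.
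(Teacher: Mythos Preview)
Your proof is correct and follows essentially the same approach as the paper: reduce to the openness of $f$ (the paper cites Hartshorne, Exercise~III.9.1, rather than EGA~IV, but the content is the same), then use the elementary point-set argument that any nonempty open $U\subseteq Y$ has nonempty open image $f(U)$, which must meet $\mathcal{V}$. Your added remarks about finite presentation and the $\spec \mathbb{Z}_{(p)}\to\spec \mathbb{Z}$ counterexample are correct and helpful but not in the paper's own proof.
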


\begin{proof}
It is enough to show that for any non-empty open subset $U$ in $Y$, the intersection $U \cap f^{-1}(\mathcal{V})$ is not empty. In fact, a flat morphism of finite type between Noetherian schemes is open \cite[Exercise III.9.1]{Har77}. Since $f$ is open, the image $f(U)$ is a non-empty open subset in $Z$ for any non-empty open subset $U$ in $Y$. Then we have $f(U) \cap \mathcal{V} \neq \emptyset$, which forces $U \cap f^{-1}(\mathcal{V})\neq \emptyset$.
\end{proof}





\begin{theorem}\label{thm:density_of_outputs}
Assume that 
\[Y = \{ x \in \mathcal{X}_{\leq D}  \mid \text{$p$ is flat at $x$}\}\] is not empty and $\mathcal{X}_{\leq D}$ is irreducible. Then $(\delta \circ p)^{-1}(\mathcal{U}) \cap \mathcal{X}_{\leq D}$ is dense in $\mathcal{X}_{\leq D}$.
\end{theorem}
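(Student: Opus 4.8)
The plan is to deduce the statement from Lemma \ref{lem:density_of_preimage} by passing to the flat locus $Y$. First I would record the purely topological reductions. Since the flat locus of a morphism of finite type between Noetherian schemes is open, $Y$ is a nonempty open subscheme of $\mathcal{X}_{\leq D}$; and in an irreducible space every nonempty open subset is dense, so $Y$ is dense in $\mathcal{X}_{\leq D}$. Consequently, to prove that $(\delta\circ p)^{-1}(\mathcal{U})\cap\mathcal{X}_{\leq D}$ is dense in $\mathcal{X}_{\leq D}$, it suffices to show that its intersection with $Y$ is dense in $Y$: density is transitive, and any subset of $\mathcal{X}_{\leq D}$ that contains a set dense in $\mathcal{X}_{\leq D}$ is itself dense.

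Next I would verify the hypotheses of Lemma \ref{lem:density_of_preimage} for the restricted map $p|_Y\colon Y\to R_{\leq D}^{n\times n}$. Flatness holds by the very definition of $Y$ (flatness is local on the source, so $p$ flat at every point of $Y$ means $p|_Y$ is flat); the morphism is of finite type because $\mathcal{X}_{\leq D}$ and $R_{\leq D}^{n\times n}$ are affine $K$-schemes of finite type and $p$ is a $K$-morphism; and all schemes in sight are Noetherian. By Lemma \ref{lem:U_is_dense}, the set $\mathcal{V}:=\delta^{-1}(\mathcal{U})\cap R_{\leq D}^{n\times n}$ is Zariski dense in $R_{\leq D}^{n\times n}$. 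Applying Lemma \ref{lem:density_of_preimage} with $f=p|_Y$ gives that $(p|_Y)^{-1}(\mathcal{V})$ is dense in $Y$.

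Finally I would unwind the identifications: $(p|_Y)^{-1}(\mathcal{V}) = Y\cap p^{-1}\!\big(\delta^{-1}(\mathcal{U})\big) = Y\cap(\delta\circ p)^{-1}(\mathcal{U})\cap\mathcal{X}_{\leq D}$, which is contained in $(\delta\circ p)^{-1}(\mathcal{U})\cap\mathcal{X}_{\leq D}$. Combining this with the first paragraph — the left-hand side is dense in $Y$, $Y$ is dense in $\mathcal{X}_{\leq D}$, and a superset of a dense set is dense — yields that $(\delta\circ p)^{-1}(\mathcal{U})\cap\mathcal{X}_{\leq D}$ is dense in $\mathcal{X}_{\leq D}$, which is the claim.

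I expect the only real point requiring care to be the bookkeeping of hypotheses for Lemma \ref{lem:density_of_preimage}: fixing scheme structures on $\mathcal{X}_{\leq D}$ and the affine spaces so that $p$ is genuinely a morphism of $K$-schemes of finite type, and invoking the openness of the flat locus so that the nonempty $Y$ is dense. The irreducibility of $\mathcal{X}_{\leq D}$, which is exactly the content of Heuristic \ref{heu:chi_is_irr_intro}, is what makes the nonempty open $Y$ dense; without it one would only obtain density inside the closure of $Y$, and this is where the heuristic assumption is essential.
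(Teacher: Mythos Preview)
Your proposal is correct and follows essentially the same approach as the paper: both arguments use openness of the flat locus to get a nonempty open $Y$, irreducibility of $\mathcal{X}_{\leq D}$ to make $Y$ dense, Lemma~\ref{lem:U_is_dense} for density of $\delta^{-1}(\mathcal{U})\cap R_{\leq D}^{n\times n}$, and Lemma~\ref{lem:density_of_preimage} applied to $p|_Y$. The only cosmetic difference is that the paper unwinds the final density step by intersecting with an arbitrary nonempty open $\mathcal{W}\subset\mathcal{X}_{\leq D}$, whereas you phrase it via transitivity of density; these are the same argument.
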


\begin{proof}
Since $R_{\leq D}^{n \times n}$ is an integral scheme, the flat locus $Y$ is open in $\mathcal{X}_{\leq D}$. In particular, the restriction $p\vert_Y: Y \rightarrow R_{\leq D}^{n \times n}$ is a flat morphism of finite type between Noetherian schemes. By Lemma \ref{lem:density_of_preimage}, the preimage
\[ p^{-1}(\delta^{-1}(\mathcal{U}) \cap R_{\leq D}^{n \times n}) \cap Y\]
is dense in $Y$. Now let us show the density of $(\delta \circ p)^{-1}(\mathcal{U}) \cap \mathcal{X}_{\leq D}$ in $\mathcal{X}_{\leq D}$. For any non-empty open subset $\mathcal{W}$ in $\mathcal{X}_{\leq D}$, the intersection $Y \cap \mathcal{W}$ is a non-empty open subset in $Y$ by our assumption. Then the intersection
\[ (Y \cap \mathcal{W}) \cap (p^{-1}(\delta^{-1}(\mathcal{U}) \cap R_{\leq D}^{n \times n}) \cap Y) = Y \cap \mathcal{W} \cap (\delta \circ p)^{-1}(\mathcal{U})\]
is not empty. In particular,
\[ (\delta \circ p)^{-1}(\mathcal{U}) \cap \mathcal{X}_{\leq D} \cap \mathcal{W} \supset  (\delta \circ p)^{-1}(\mathcal{U}) \cap Y \cap \mathcal{W} \]
is also not empty. Therefore $(\delta \circ p)^{-1}(\mathcal{U}) \cap \mathcal{X}_{\leq D}$ is dense in $\mathcal{X}_{\leq D}$.
\end{proof}
The main result of Section \ref{sec:problems} and of this document can then be obtained as a corollary. 
\begin{corollary}[Theorem \ref{thm:main}]\label{cor:m2n}
If $m \geq 2n \geq 3$ and $\mathcal{X}_{\leq D}$ is irreducible, then the set $\mathcal{F}_0 \cap \tilde{\mathcal{F}}_{\leq D}$ is dense in $\tilde{\mathcal{F}}_{\leq D}$.
\end{corollary}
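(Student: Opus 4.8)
The plan is to deduce Corollary~\ref{cor:m2n} from Theorem~\ref{thm:density_of_outputs}: the only missing ingredient is the non-emptiness of the flat locus $Y = \{x \in \mathcal{X}_{\leq D} \mid p \text{ is flat at } x\}$, and everything else is a transfer of density along the continuous surjection $\varphi\colon\mathcal{X}_{\leq D}\to\tilde{\mathcal{F}}_{\leq D}$. Concretely, granting the conclusion of Theorem~\ref{thm:density_of_outputs} that $(\delta\circ p)^{-1}(\mathcal{U})\cap\mathcal{X}_{\leq D}$ is dense in $\mathcal{X}_{\leq D}$, Proposition~\ref{prop:enough_condition_of_output} together with the identification $\tilde{\mathcal{F}}_{\leq D}=\varphi(\mathcal{X}_{\leq D})$ gives $\varphi\big((\delta\circ p)^{-1}(\mathcal{U})\cap\mathcal{X}_{\leq D}\big)\subseteq\mathcal{F}_0\cap\tilde{\mathcal{F}}_{\leq D}$, and since the image of a dense subset under a continuous surjection is dense, $\mathcal{F}_0\cap\tilde{\mathcal{F}}_{\leq D}$ is dense in $\tilde{\mathcal{F}}_{\leq D}$. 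So the whole statement reduces to checking the hypothesis $Y\neq\emptyset$ of Theorem~\ref{thm:density_of_outputs} under $m\geq 2n$.

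To this end I would first show that $p\colon\mathcal{X}_{\leq D}\to R_{\leq D}^{n\times n}$, $(B,A)\mapsto B_1A_1$, is \emph{surjective} whenever $m\geq 2n$. Given an arbitrary $C\in R_{\leq D}^{n\times n}$, set
\[ B_1 = E_n,\qquad A_1 = C,\qquad A_2 = \left(\begin{array}{c} E_n \\ \hline O_{(m-2n)\times n}\end{array}\right),\qquad B_2 = \left(\begin{array}{c|c} E_n - C & O_{n\times(m-2n)}\end{array}\right); \]
these are well defined precisely because $m-n\geq n$, and all their entries have degree $\leq D$. Then $B_1A_1=C$ and $B_2A_2=E_n-C$, so $BA=B_1A_1+B_2A_2=E_n$; hence $(BA-E_n)G=O_{n\times1}$ and $B_1A_1=C\in R_{\leq D}^{n\times n}$, which shows $(B,A)\in\mathcal{X}_{\leq D}$ and $p(B,A)=C$. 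In particular $p$ is dominant.

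Next, $R_{\leq D}^{n\times n}$ is an integral Noetherian scheme and $p$ is of finite type, so generic flatness furnishes a non-empty (hence dense) open $U\subseteq R_{\leq D}^{n\times n}$ with $p^{-1}(U)\to U$ flat; thus $Y\supseteq p^{-1}(U)$, and $p^{-1}(U)\neq\emptyset$ because the dominant image of $p$ meets the dense open $U$. (Equivalently, endowing $\mathcal{X}_{\leq D}$ with its reduced structure, dominance forces the generic point of $\mathcal{X}_{\leq D}$ to map to the generic point of $R_{\leq D}^{n\times n}$, whose local ring is a field, so $p$ is automatically flat there.) Hence $Y\neq\emptyset$, and since $\mathcal{X}_{\leq D}$ is irreducible by hypothesis, Theorem~\ref{thm:density_of_outputs} applies; combined with the first paragraph this proves the corollary.

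I expect the crux to be exactly this verification that $Y\neq\emptyset$: one must see why $m\geq 2n$ — rather than merely $m\geq n$ — is what makes $p$ hit all of $R_{\leq D}^{n\times n}$ (the block construction above is the natural witness, with the subtlety that the auxiliary degree constraint $B_1A_1\in R_{\leq D}^{n\times n}$ in the definition of $\mathcal{X}_{\leq D}$ must be kept satisfied), and then to import the scheme-theoretic input of generic flatness over an integral base. Once surjectivity of $p$ and non-emptiness of $Y$ are in hand, the remainder is bookkeeping: continuity and surjectivity of $\varphi$ onto $\tilde{\mathcal{F}}_{\leq D}$, the inclusion of Proposition~\ref{prop:enough_condition_of_output}, and the elementary fact that continuous surjections preserve density.
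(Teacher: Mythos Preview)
Your proposal is correct and follows essentially the same route as the paper: construct an explicit section of $p$ using the extra $m-n\geq n$ columns/rows to force $BA=E_n$ while hitting any prescribed $C=B_1A_1$, and then invoke generic flatness over the integral base $R_{\leq D}^{n\times n}$ together with surjectivity of $p$ to get $Y\neq\emptyset$. The only cosmetic difference is that the paper places $E_n-C$ in $A_2$ and $E_n$ in $B_2$, whereas you swap these; both choices give $B_2A_2=E_n-C$ and satisfy the degree bound, so the arguments are interchangeable.
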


\begin{proof}[Proof. (also proof of Theorem \ref{thm:main})]
It is enough to show that $Y$ in Theorem \ref{thm:density_of_outputs} is not empty. First we shall construct a section  $\iota : R_{\leq D}^{n \times n} \rightarrow \mathcal{X}_{\leq D}$ of $p: \mathcal{X}_{\leq D} \rightarrow R_{\leq D}^{n \times n}$ such that $p \circ \iota = \mathrm{id}$. For any $C \in R_{\leq D}^{n \times n}$, let us consider matrices
\[ B_2 = (E_n \mid O_{n \times (m-2n)}) \in R^{n \times (m-n)},\ A_2 = \left( \begin{array}{c} E_n - C \\ \hline O_{(m-2n) \times n} \end{array} \right) \in R^{(m-n) \times n} \]
and put
\[ B = (E_n \mid B_2),\ A = \left( \begin{array}{c} C \\ \hline A_2 \end{array} \right). \]
Then we have
\[ E_n - B_2 A_2 = E_n - (E_n-C) = C,\ BA = C+B_2A_2 = E_n. \]
By defining $\iota(C) := (B,A)$, the pair $(B,A)$ is an element of $\mathcal{X}_{\leq D}$ and it holds that $p(B,A) = C$. Clearly, the map $\iota: R_{\leq D}^{n \times n} \rightarrow \mathcal{X}_{\leq D}$ is a morphism of schemes.

Now, we show that the flat locus $Y$ is not empty. Let us consider $R_{\leq D}^{n \times n}$ to be the affine scheme $\spec K[c_{\alpha,i,j} \mid x^{\alpha} \in R_{\leq D},\ 1 \leq i,j \leq n]$. Let $L$ be the field of fractions of $K[c_{\alpha,i,j} \mid x^{\alpha} \in R_{\leq D},\ 1 \leq i,j \leq n]$. Then $\spec L$ is isomorphic to a non-empty open subscheme $\mathcal{V}$ in $R_{\leq D}^{n \times n}$. Since any module over $L$ is flat, the preimage $p^{-1}(\mathcal{V})$ is contained in $Y$. Therefore the surjectivity of $p$ implies that $Y$ is not empty.
\end{proof}

\begin{remark}\label{rem:nonempty_flat_locus}
Similar to the proof of Corollary \ref{cor:m2n}, in any case, there exists a non-empty open dense subscheme $\mathcal{V} \subset R_{\leq D}^{n \times n}$ so that
\[ p : p^{-1}(\mathcal{V}) \cap \mathcal{X}_{\leq D} \rightarrow R_{\leq D}^{n \times n} \]
is flat.
If the flat locus $Y$ is empty, then the image $p(X_{\leq D})$ is in the closed subscheme $R_{\leq D}^{n \times n} \setminus \mathcal{V}$, which is very rare case where $p$ is a generic morphism of schemes and $\mathcal{V}$ is a generic open dense subscheme. This is one reason we assumed Heuristic \ref{heu:chi_is_irr_intro}.
\end{remark}

\if0 

\appendix
\section{Limit of polynomial rings}
\label{sec:app}

Let $\Lambda$ be a set. Let
\[ \mathbb{E}(\Lambda) = \mathop{\bigoplus}_{\lambda \in \Lambda} \mathbb{Z}_{\geq 0} = \{ \varepsilon = (e_{\lambda}) \mid e_{\lambda} = 0\ \text{for all but finitely many $\lambda \in \Lambda$}\}.\]
We denote the polynomial ring in symbolic variables $\{x_{\lambda}\mid \lambda \in \Lambda\}$ corresponding to elements in $\Lambda$ by
\[ K[\Lambda] = \left\{ \left. \mathop{\sum}_{\varepsilon \in\mathbb{E}(\Lambda)} c_{\varepsilon} x^{\varepsilon} \right| \begin{aligned} &c_{\varepsilon} \in K \ \text{for all $\varepsilon \in\mathbb{E}(\Lambda)$,}\\ & c_{\varepsilon} = 0\ \text{for all but finitely many $\varepsilon \in \mathbb{E}(\Lambda)$} \end{aligned} \right\}. \]
Also, we denote the formal power series ring in symbolic variables $\{x_{\lambda} \mid \lambda \in \Lambda\}$ corresponding to an element in $\Lambda$ by
\[ K[[\Lambda]] = \left\{ \left. \mathop{\sum}_{\varepsilon \in\mathbb{E}(\Lambda)} c_{\varepsilon} x^{\varepsilon} \right| c_{\varepsilon} \in K \ \text{for all $\varepsilon \in\mathbb{E}(\Lambda)$} \right\}. \]
For any two finite subsets $A,B \subset \Lambda$ such that $A \supset B$, we have a canonical $K$-algebra morphism
\[ f_{A \supset B} : K[A] \rightarrow K[B] \]
such that $f_{A \supset B}(x_{\lambda}) = x_{\lambda}$ if $\lambda \in A \setminus B$ and $f_{A \supset B}(x_{\lambda}) = 0$ otherwise. Then, for any set $\mathcal{P}$ of finite subsets in $\Lambda$, we can consider the limit
\[  \lim_{\substack{\to \\ A\in \mathcal{P}}} K[A] \]
of the system $(\{K[A]\mid A \in \mathcal{P} \},\{f_{A\supset B} \mid A,B \in \mathcal{P}, A \supset B\})$ in the category of $K$-algebras (See \cite{mac2013categories} for the elements of limits in categories.). 

\begin{proposition}\label{prop:limit_of_polynomial_rings}
For any set $\mathcal{P}$ of finite subsets in a set $\Lambda$, the limit $\displaystyle \lim_{\substack{\to \\ A\in\mathcal{P}}} K[A]$ exists in the category of $K$-algebras.
\end{proposition}

\begin{proof}
For any $A \in \mathcal{P}$, let
\[ \pi_A : K[[\Lambda]] \rightarrow K[[\Lambda]] \]
be a $K$-algebra morphism such that $\pi_A(x_{\lambda})= x_{\lambda}$ if $\lambda \in A$ and $\pi_A(x_{\lambda}) = 0$ otherwise. Namely, this morphism $\pi_A$ kills terms including variables not corresponding to elements in $A$. In particular, we have $\pi_A(K[[\Lambda]]) = K[[A]]$. Note that the polynomial ring $K[A]$ is a subring of $K[[A]]$. Let $R$ be the subring of $K[[\Lambda]]$ defined by
\[ R = \mathop{\bigcap}_{A \in \mathcal{P}} \pi_A^{-1}(K[A]).\]
We show that the ring $R$ and the morphisms $ \{(\pi_A)_{\mid R} : R \rightarrow K[A]\}_{A \in \mathcal{P}}$ is the limit of the system $(\{K[A]\mid A \in \mathcal{P} \},\{f_{A\supset B} \mid A,B \in \mathcal{P}, A \supset B\})$. Clearly, we have $f_{A \supset B} \circ (\pi_{A})_{\mid R} = (\pi_{B})_{\mid R}$ for any two finite subsets $A,B \in \mathcal{P}$ such that $A \supset B$. Let $S$ be an arbitrary $K$-algebra and $\{\rho_A : S \rightarrow K[A]\}_{A \in \mathcal{P}}$ be arbitrary $K$-algebra morphisms such that $f_{A \supset B} \circ \rho_{A} = \rho_{B}$ for any two finite subsets $A,B \in \mathcal{P}$. It is enough to show that there exists a unique $K$-algebra morphism $\varphi : S \rightarrow R$ such that $\pi_A \circ \varphi = \rho_A$ for any $A \in \mathcal{P}$. To construct a $K$-algebra morphism $\varphi$, let us assume that
\[ \rho_A(s) = \mathop{\sum}_{\varepsilon \in \mathbb{E}(A)} d_{\varepsilon}(A,s) x^{\varepsilon} \in K[A]\]
for any $A \in \mathcal{P}$ and $s \in S$. Then we define an element $\varphi(s) \in K[[\Lambda]]$ as 
\[ \varphi(s) = \mathop{\sum}_{\varepsilon \in \mathbb{E}(\Lambda)} d_{\varepsilon}(\supp(\varepsilon),s) x^{\varepsilon} \in K[[\Lambda]],\]
where $\supp(\varepsilon) = \{ \lambda \in \Lambda \mid e_{\lambda} \neq 0\}$. First we show that $\pi_A \circ \varphi = \rho_A$ for any $A \in \mathcal{P}$. For any $\varepsilon \in \mathbb{E}(A)$ and $s \in S$, from the commutativity $f_{A \supset \supp(\varepsilon)} \circ \rho_{A} = \rho_{\supp(\varepsilon)}$, the coefficient of $\rho_{\supp(\varepsilon)}(s)$ at $\varepsilon$, which is $d_{\varepsilon}(\supp(\varepsilon),s)$, is just the coefficient of $\rho_{A}(s)$ at $\varepsilon$, which is $d_{\varepsilon}(A,s)$. Therefore we have $d_{\varepsilon}(\supp(\varepsilon),s) = d_{\varepsilon}(A,s)$. Then for any $\varepsilon \in \mathbb{E}(A)$, the coefficient of $\varphi(s)$ and the coefficient of $\rho_A(s)$ are the same at $\varepsilon$. Since $\pi_A$ kills terms including variables not corresponding to elements in $A$, this implies that $\pi_A \circ \varphi = \rho_A$. In particular, for any $A \in \mathcal{P}$, we have
\[ \pi_A(\varphi(S)) \subset \rho_A(S) \subset K[A],\]
and then the image of $\varphi$ is in $R = \bigcap_{A \in \mathcal{P}} \pi_A^{-1}(K[A])$. Next we show that $\varphi$ is a $K$-algebra mophism. It is clear that $\varphi(0) = 0$, $\varphi(1) = 1$, and $\varphi(s+t) = \varphi(s)+\varphi(t)$ for any $s,t \in S$. For any $s,t \in S$, and $\varepsilon \in \mathbb{E}(\Lambda)$, the coefficient of $\varphi(st)$ at $\varepsilon$ is $d_{\varepsilon}(\supp(\varepsilon),st)$, which is the coefficients of $\rho_{\supp(\varepsilon)}(st) =\rho_{\supp(\varepsilon)}(s)\rho_{\supp(\varepsilon)}(t)$ at $\varepsilon$. Putting $B = \supp(\varepsilon)$, let us compute the multiplication $\rho_{B}(s)\rho_{B}(t)$:
\[ \begin{aligned} \rho_{B}(s)\rho_{B}(t) &= \left(\mathop{\sum}_{\alpha \in \mathbb{E}(B)} d_{\alpha}(B,s) x^{\alpha}\right) \left( \mathop{\sum}_{\beta \in \mathbb{E}(B)} d_{\beta}(B,s) x^{\beta} \right)\\
&= \mathop{\sum}_{\gamma \in \mathbb{E}(B)} \left(\mathop{\sum_{\alpha+\beta = \gamma}}_{\alpha,\beta \in \mathbb{E}(B)} d_{\alpha}(B,s)d_{\beta}(B,t) \right) x^{\gamma}
\end{aligned}
\]
Then we have
\[d_{\varepsilon}(\supp(\varepsilon),st) = \mathop{\sum_{\alpha+\beta = \varepsilon}}_{\alpha,\beta \in \mathbb{E}(\supp(\varepsilon))} d_{\alpha}(\supp(\varepsilon),s)d_{\beta}(\supp(\varepsilon),t). \]
On the other hand, let us compute the multiplication $\varphi(s)\varphi(t)$:
\[ \begin{aligned} \varphi(s)\varphi(t) &= \left(\mathop{\sum}_{\alpha \in \mathbb{E}(\Lambda)} d_{\alpha}(\supp(\alpha),s) x^{\alpha}\right) \left( \mathop{\sum}_{\beta \in \mathbb{E}(\Lambda)} d_{\beta}(B,s) x^{\beta} \right)\\
&= \mathop{\sum}_{\gamma \in \mathbb{E}(\Lambda)} \left(\mathop{\sum_{\alpha+\beta = \gamma}}_{\alpha,\beta \in \mathbb{E}(\Lambda)} d_{\alpha}(\supp(\alpha),s)d_{\beta}(\supp(\beta),t) \right) x^{\gamma}
\end{aligned}
\]
Then the coefficient of $\varphi(s)\varphi(t)$ at $\varepsilon$ is
\[ \mathop{\sum_{\alpha+\beta = \varepsilon}}_{\alpha,\beta \in \mathbb{E}(\Lambda)} d_{\alpha}(\supp(\alpha),s)d_{\beta}(\supp(\beta),t).\]
For any pair $\alpha,\beta \in \mathbb{E}(\Lambda)$ such that $\alpha + \beta = \varepsilon$, it holds that
\[ \supp(\alpha), \supp(\beta) \subset \supp(\varepsilon).\]
Then these elements $\alpha,\beta$ are canonically in $\mathbb{E}(\supp(\varepsilon))$. Moreover, from the commutativity $f_{\supp(\varepsilon) \supset \supp(\alpha)} \circ \rho_{\supp(\varepsilon)} = \rho_{\supp(\alpha)}$, we have
\[ d_{\alpha}(\supp(\alpha),s) = d_{\alpha}(\supp(\varepsilon),s)).\]
This also holds for $\beta$ and $t$. Then the coefficient of $\varphi(st)$ and the coefficient of $\varphi(s)\varphi(t)$ are the same at $\varepsilon$. Therefore we have $\varphi(st) = \varphi(s)\varphi(t)$. Finally, we show the uniqueness of $\varphi$. Assume that there exists a $K$-algebra morphism $\psi : S \rightarrow R$ such that $\pi_A \circ \psi = \rho_A$ for any $A \in \mathcal{P}$. For any $\varepsilon \in \mathbb{E}(\Lambda)$ and $s \in S$, we have
\[ \pi_{\supp(\varepsilon)}(\psi(s)) = \rho_{\supp(\varepsilon)}(s) = \sum_{\alpha \in \mathbb{E}(\supp(\varepsilon))} d_{\alpha}(\supp(\varepsilon),s) x^{\alpha}.\]
In particular, the coefficient of $\psi(s)$ at $\varepsilon$ is $d_{\varepsilon}(\supp(\varepsilon),s)$, which is the coefficient of $\varphi(s)$ at $\varepsilon$. Therefore we obtain $\varphi = \psi$.
\end{proof}

\fi

\bibliographystyle{amsalpha}
\bibliography{main}

\end{document}